\documentclass[letterpaper]{article} 
\usepackage[preprint]{aaai2027}  
\usepackage[hyphens]{url}  
\usepackage{graphicx} 
\usepackage{natbib}  
\usepackage{caption} 
\usepackage{amsmath, amssymb}
\usepackage{amsthm}
\newtheorem{theorem}{Theorem}
\usepackage{booktabs}
\usepackage{multirow}
\usepackage{placeins}

\usepackage{algorithm}
\usepackage{algorithmic}
\usepackage{newfloat}
\usepackage{listings}
\usepackage{nameref}  

\newcommand{\sampleGraphic}[1]{%
  \IfFileExists{#1.jpg}{%
    \includegraphics[width=\linewidth]{#1.jpg}%
  }{%
    \IfFileExists{compressed_samples/#1.jpg}{%
      \includegraphics[width=\linewidth]{compressed_samples/#1.jpg}%
    }{%
      \includegraphics[width=\linewidth]{#1.png}%
    }%
  }%
}
\newcommand{\qwenSampleRow}[2]{%
  \noindent\begin{minipage}[t]{0.32\linewidth}\centering
    \sampleGraphic{qwenbase_hps_clip/checkpoint-71-0/#1}\\
    {\scriptsize Baseline}
  \end{minipage}\hfill
  \begin{minipage}[t]{0.32\linewidth}\centering
    \sampleGraphic{qwenjagg_hps_clip/checkpoint-71-0/#1}\\
    {\scriptsize JAGG}
  \end{minipage}\hfill
  \begin{minipage}[t]{0.32\linewidth}\centering
    \sampleGraphic{qwenjagg_ablate_firstlast_hps_clip/checkpoint-71-0/#1}\\
    {\scriptsize JAGG (ablate)}
  \end{minipage}\\[2pt]
  {\footnotesize\textbf{\#\,#1:} \emph{#2}}\par\vspace{8pt}
}
\newcommand{\fluxSampleRow}[2]{%
  \noindent\begin{minipage}[t]{0.48\linewidth}\centering
    \sampleGraphic{flux_hps_clip_base/checkpoint-81-0/#1}\\
    {\scriptsize Baseline}
  \end{minipage}\hfill
  \begin{minipage}[t]{0.48\linewidth}\centering
    \sampleGraphic{flux_hps_clip_jagg/checkpoint-81-0/#1}\\
    {\scriptsize JAGG}
  \end{minipage}\\[2pt]
  {\footnotesize\textbf{\#\,#1:} \emph{#2}}\par\vspace{8pt}
}
\newcommand{\flowFastQwenSampleRow}[3]{%
  \noindent\begin{minipage}[t]{0.48\linewidth}\centering
    \sampleGraphic{figures/flowgrpo_fast_qwenimage_jagg_vs_baseline_step181/#1/baseline_#2}\\
    {\scriptsize Baseline (181 steps)}
  \end{minipage}\hfill
  \begin{minipage}[t]{0.48\linewidth}\centering
    \sampleGraphic{figures/flowgrpo_fast_qwenimage_jagg_vs_baseline_step181/#1/jagg_#2}\\
    {\scriptsize JAGG (181 steps)}
  \end{minipage}\\[2pt]
  {\footnotesize\textbf{\#\,#2:} \emph{#3}}\par\vspace{8pt}
}
\newcommand{\flowFastFluxSampleRow}[3]{%
  \noindent\begin{minipage}[t]{0.32\linewidth}\centering
    \sampleGraphic{figures/flowgrpo_fast_flux_jagg_ablate_vs_baseline_step181/#1/baseline_#2}\\
    {\scriptsize Baseline}
  \end{minipage}\hfill
  \begin{minipage}[t]{0.32\linewidth}\centering
    \sampleGraphic{figures/flowgrpo_fast_flux_jagg_ablate_vs_baseline_step181/#1/jagg_#2}\\
    {\scriptsize JAGG}
  \end{minipage}\hfill
  \begin{minipage}[t]{0.32\linewidth}\centering
    \sampleGraphic{figures/flowgrpo_fast_flux_jagg_ablate_vs_baseline_step181/#1/ablation_#2}\\
    {\scriptsize JAGG (ablate)}
  \end{minipage}\\[2pt]
  {\footnotesize\textbf{\#\,#2:} \emph{#3}}\par\vspace{8pt}
}

\DeclareCaptionStyle{ruled}{labelfont=normalfont,labelsep=colon,strut=off} 
\floatstyle{ruled}
\newfloat{listing}{tb}{lst}{}
\floatname{listing}{Listing}
\title{JAGG: Jacobian-Aggregated Group Gradient for Efficient GRPO Training of Diffusion Models}
\author{
    Ruiyi Ding\textsuperscript{\rm 1}\thanks{Work done during internship at Kling AI Infra},
    Jie Li\textsuperscript{\rm 1},
    Kang He\textsuperscript{\rm 1},
    Ziyan Liu\textsuperscript{\rm 5},
    Chengru Song\textsuperscript{\rm 1}\corresponding,
    Yuan Chen\textsuperscript{\rm 2,3,4}\corresponding
}
\affiliations{
    \textsuperscript{\rm 1}KlingAI Research\\
    \textsuperscript{\rm 2}Fudan University\quad
    \textsuperscript{\rm 3}Shanghai AI Incubation and Innovation Center\\
    \textsuperscript{\rm 4}Shanghai Academy of AI for Science\quad
    \textsuperscript{\rm 5}University of Science and Technology of China\\
    dingruiyi@klingai.com, songchengru@klingai.com, cheng\_yuan@fudan.edu.cn
}

\begin{document}

\maketitle

\begin{abstract}

Group Relative Policy Optimization (GRPO) is a powerful reinforcement learning algorithm for aligning generative models with human preferences. While successful in large language models~\cite{shao2024deepseekmathpushinglimitsmathematical}, its extension to diffusion and flow matching models introduces a severe computational bottleneck: gradients must be back-propagated through the high-capacity DiT backbone at \emph{every} timestep of the sampling trajectory, making high-resolution text-to-image (T2I) training prohibitively expensive. Training-free DiT inference acceleration methods (e.g., $\Delta$-DiT, ScalingCache) exploit the fact that DiT hidden states and velocity predictions vary \emph{smoothly and nearly linearly} along the trajectory. We ask whether the same linearity can reduce the backward-pass cost of DiT RL training, and answer affirmatively with \textbf{JAGG} (\textbf{J}acobian-\textbf{A}ggregated \textbf{G}roup \textbf{G}radient), which reduces full transformer backward passes from $W$ to $2$ per group of $W$ consecutive steps. JAGG approximates intermediate-step Jacobians via $t$-weighted interpolation of the endpoint Jacobians, then aggregates per-step upstream signals into two composite gradients applied through a single joint backward pass. We prove this interpolation is \emph{exact} when the velocity is linear in $(z,t)$, and a cosine-similarity routing rule (\texttt{jagg\_frac}) deploys JAGG only where the assumption holds. Experiments on T2I benchmarks show JAGG delivers $\sim$2$\times$ backward speedup with negligible quality degradation.
\end{abstract}

\section{Introduction}

Group Relative Policy Optimization (GRPO) has recently emerged as a compelling reinforcement learning (RL) algorithm for aligning generative models with human feedback and reward signals. Initially proposed in the context of large language models (LLMs)~\cite{shao2024deepseekmathpushinglimitsmathematical}, GRPO eliminates the need for a separate value network by normalizing advantages \emph{within} a group of candidate responses sampled from the same policy. This design significantly reduces the memory and computational overhead compared to Proximal Policy Optimization (PPO), making online RL training practical for large-scale autoregressive models. The success of GRPO in mathematical reasoning~\cite{shao2024deepseekmathpushinglimitsmathematical}, long-context tasks~\cite{ping2026longrunleashinglongcontextreasoning}, and multimodal alignment~\cite{wang2026prismprealignmentblackboxonpolicy} has spurred interest in extending the paradigm beyond language.

More recently, GRPO has been adapted to diffusion-based visual generation, where it has shown promising results in improving image quality, text alignment, and aesthetic preference. Two pioneering works---DanceGRPO~\cite{xue2025dancegrpounleashinggrpovisual} and Flow-GRPO~\cite{liu2025flowgrpotrainingflowmatching}---demonstrated that applying GRPO to diffusion transformers (DiTs) and flow matching models can substantially enhance generation quality. Both methods treat the denoising process as a multi-step decision-making problem, computing a policy gradient loss at each timestep of the sampling trajectory. However, this per-step formulation introduces a severe computational bottleneck: standard training requires one full forward~+~backward pass through the denoising network for \emph{every} timestep in each training window. For high-resolution text-to-image (T2I) generation, where sampling trajectories typically span dozens of steps, this leads to prohibitively expensive training.

The key distinguishing feature of DiT RL, compared to language model RL, is that the ``actions'' are entire high-dimensional denoising steps, and the policy must be optimized over long multi-step trajectories that unfold through the DiT backbone. This structure imposes an unavoidable requirement: \textbf{gradients must be back-propagated through the denoising network at every timestep of the trajectory} in order to update model parameters. Unlike language models, where each token step is lightweight, each backward pass through a modern DiT incurs the full cost of a transformer forward pass, making \textbf{the per-step backward requirement the dominant training expense}.

Interestingly, this same trajectory structure has been extensively studied in the context of \emph{training-free} DiT inference acceleration. A line of work has observed that \textbf{the hidden states, attention outputs, and velocity predictions of DiTs vary \emph{smoothly and nearly linearly} along the denoising trajectory}: consecutive denoising steps produce similar intermediate representations that can be reused, cached, or extrapolated with minimal quality loss. Methods such as $\Delta$-DiT~\cite{chen2024deltadittrainingfreeaccelerationmethod}, Pyramid Attention Broadcast~\cite{zhao2025realtimevideogenerationpyramid}, TaylorSeers~\cite{liu2025reusingforecastingacceleratingdiffusion}, and ScalingCache~\cite{gu2026scalingcache} all exploit this trajectory smoothness to skip or approximate network evaluations at intermediate timesteps, achieving substantial inference speedups with negligible quality degradation.

This raises a natural question: \textbf{can the same trajectory linearity that benefits training-free inference acceleration also be exploited to reduce the backward-pass cost in DiT RL training?} If the Jacobians of the denoising velocity field with respect to model parameters vary linearly along the trajectory---mirroring the observed smoothness of the forward activations---then the expensive per-step backward passes could be replaced by cheap interpolations, reducing the total number of full network backward passes from $W$ to a small constant.

In this work, we address the efficiency challenge of diffusion GRPO training. We propose \textbf{JAGG} (\textbf{J}acobian-\textbf{A}ggregated \textbf{G}roup \textbf{G}radient), a mathematically principled acceleration method that reduces the number of network backward passes from $W$ to $2$ for a group of $W$ consecutive denoising steps. Our key insight is that the Jacobian of the denoising velocity field $v_\theta(z,t)$ with respect to model parameters $\theta$ is approximately linear in the inputs $(z,t)$ during the high-noise regime, which permits accurate interpolation of intermediate-step Jacobians from only the endpoint Jacobians. We provide a formal proof that this interpolation is \emph{exact} when the velocity is strictly linear, and demonstrate empirically that the approximation holds with negligible error under practical training conditions.

We validate JAGG on T2I benchmarks, demonstrating consistent acceleration with negligible degradation in generation quality. Our contributions are:
\begin{itemize}
    \item We identify and formalize the computational bottleneck in diffusion GRPO training, arising from per-timestep backward passes across the sampling trajectory.
    \item We propose JAGG, a Jacobian-aggregation framework that reduces backward passes from $W$ to $2$ per group, grounded in a rigorous theoretical analysis of the linearity of velocity Jacobians.
    \item We prove that under linear velocity conditions, the $t$-weighted Jacobian interpolation is exact with zero approximation error.
    \item We demonstrate the generality of our method through extensive experiments on text-to-image generation tasks across multiple diffusion backbones.
\end{itemize}

\section{Related Work}

\noindent\textbf{Diffusion, Flow Matching, and RL Fine-tuning.} Diffusion models~\cite{ho2020denoisingdiffusionprobabilisticmodels,song2022denoisingdiffusionimplicitmodels} and rectified-flow matching~\cite{liu2022rectifiedflowstraightfastlearning} underpin modern visual generation, with surveys of their interplay with RL in~\cite{zhu2024diffusionmodelsreinforcementlearning,10529221}. Early RL-based fine-tuning frames denoising as a multi-step decision process (DDPO~\cite{black2024trainingdiffusionmodelsreinforcement}), and follow-ups explore preference optimization (DPOK~\cite{NEURIPS2023_fc65fab8}, DiffPO~\cite{chen2025diffpodiffusionstyledpreferenceoptimization}), KL-regularized PPO (DiffPPO~\cite{10987802}), text-encoder RL (TexForce~\cite{10.1007/978-3-031-72698-9_11}), and diversity-oriented RL~\cite{10656815}.

\noindent\textbf{GRPO for Visual Generation.} DanceGRPO~\cite{xue2025dancegrpounleashinggrpovisual} and Flow-GRPO~\cite{liu2025flowgrpotrainingflowmatching} first adapted GRPO to diffusion/flow models, with follow-ups on unified reasoning-driven optimization (UniGRPO~\cite{liu2026unigrpounifiedpolicyoptimization}), stability via branching (BranchGRPO~\cite{li2025branchgrpostableefficientgrpo}) or clip regulation (GRPO-Guard~\cite{wang2025grpoguardmitigatingimplicitoveroptimization}), ODE/SDE mixing (MixGRPO~\cite{li2026mixgrpounlockingflowbasedgrpo}), credit assignment (PCPO~\cite{lee2026pcpoproportionatecreditpolicy}), and extensions to masked~\cite{luo2026reinforcementlearningmeetsmasked}, autoregressive~\cite{yuan2025argrpotrainingautoregressiveimage}, and discrete~\cite{wang2026udmgrpostableefficientgroup,ma2025consolidatingreinforcementlearningmultimodal} generators, video RL~\cite{zheng2026manifoldawareexplorationreinforcementlearning}, simplified online RL for denoisers (V-GRPO~\cite{tang2026vgrpoonlinereinforcementlearning}), forward-process alignment (DiffusionNFT~\cite{zheng2026diffusionnftonlinediffusionreinforcement}), likelihood-estimation design~\cite{choi2026rethinkingdesignspacereinforcement}, and black-box distillation (PRISM~\cite{wang2026prismprealignmentblackboxonpolicy}). Beyond images, GRPO has also been applied to diffusion language models~\cite{zhong2026stabilizingreinforcementlearningdiffusion,rojas2026improvingreasoningdiffusionlanguage,zhao2025inpaintingguidedpolicyoptimizationdiffusion}. Existing accelerations of the GRPO backward pass---stochastic backward in DanceGRPO~\cite{xue2025dancegrpounleashinggrpovisual}, restricted SDE windows in Flow-GRPO~\cite{liu2025flowgrpotrainingflowmatching} and MixGRPO~\cite{li2026mixgrpounlockingflowbasedgrpo}, and merged trajectories in BranchGRPO~\cite{li2025branchgrpostableefficientgrpo}---are heuristic and discard much per-step gradient information; we instead \textbf{faithfully approximate the full backward-pass signal} at a substantially lower cost.

\noindent\textbf{GRPO in Language Models.} GRPO originated in DeepSeekMath~\cite{shao2024deepseekmathpushinglimitsmathematical} and was later extended to long-context reasoning~\cite{ping2026longrunleashinglongcontextreasoning} and connected to classification-style RLVR~\cite{zhai2026rewardslabelsrevisitingrlvr}; see~\cite{zhang2025surveyreinforcementlearninglarge} for a broader survey. Unlike inference-time acceleration (distillation, consistency models), our work is orthogonal and targets the backward-pass bottleneck specific to diffusion GRPO.

\section{Method}

In this section, we first present the preliminaries of diffusion and flow models, then introduce the GRPO policy loss and advantage formulation, and finally detail our JAGG acceleration method with its theoretical justification.

\subsection{Preliminaries}

\textbf{Diffusion and Flow Models.} Diffusion models~\cite{ho2020denoisingdiffusionprobabilisticmodels} define $\mathbf{z}_t=\alpha_t\mathbf{x}+\sigma_t\boldsymbol{\epsilon}$ with noise schedule $(\alpha_t, \sigma_t)$. Rectified flows~\cite{liu2022rectifiedflowstraightfastlearning} use linear interpolation $\mathbf{z}_t=(1-t)\mathbf{x}+t\boldsymbol{\epsilon}$ with velocity $\mathbf{u}=\boldsymbol{\epsilon}-\mathbf{x}$. Both reverse via denoising steps: for diffusion, $\mathbf{z}_s=\alpha_s\hat{\mathbf{x}}+\sigma_s\hat{\boldsymbol{\epsilon}}$ (DDIM~\cite{song2022denoisingdiffusionimplicitmodels}); for flow, $\mathbf{z}_s=\mathbf{z}_t+\hat{\mathbf{u}}\cdot(s-t)$. The generative process can be stochastic (SDE) or deterministic (ODE). Following~\cite{albergo2023buildingnormalizingflowsstochastic,albergo2025stochasticinterpolantsunifyingframework}, the generative SDEs are:
\begin{equation}
\label{eq:sde_rf}
    \mathrm{d}\mathbf{z}_t=(\mathbf{u}_t-\tfrac{1}{2}\varepsilon_t^2\nabla\log p_t(\mathbf{z}_t))\mathrm{d}t+\varepsilon_t\mathrm{d}\mathbf{w}\quad\text{(rectified flow)},
\end{equation}
\begin{equation}
\label{eq:sde_ddpm}
    \mathrm{d}\mathbf{z}_t=(f_t\mathbf{z}_t-\tfrac{1+\varepsilon_t^2}{2}g_t^2\nabla\log p_t(\mathbf{z}_t))\mathrm{d}t+\varepsilon_t g_t\mathrm{d}\mathbf{w}\quad\text{(diffusion)},
\end{equation}
where $\varepsilon_t$ controls stochasticity, $f_t,g_t$ are schedule-derived coefficients, and $\varepsilon_t{=}0$ recovers the ODE.

\subsection{GRPO Policy Loss and Advantage}

GRPO extends the standard PPO objective to a group-based formulation that eliminates the need for a value network. For each prompt, $G$ candidate generations are sampled from the current policy. Rewards $\{r_1,\ldots,r_G\}$ are collected from a reward model, and the advantage for the $i$-th sample is computed via within-group normalization:
\begin{equation}
\label{eq:advantage}
    A_i = \frac{r_i - \mu_{\text{group}}}{\sigma_{\text{group}} + \epsilon},
\end{equation}
where $\mu_{\text{group}}$ and $\sigma_{\text{group}}$ are the mean and standard deviation of rewards within the group, and $\epsilon$ is a small constant for numerical stability. This can also be extended to global advantage normalization across distributed batches~\cite{wang2026prismprealignmentblackboxonpolicy}.

For diffusion models, the generation process spans $T$ timesteps, and the policy is defined at each denoising step. Let $\pi_\theta(\mathbf{z}_{t-1}\mid\mathbf{z}_t)$ denote the stochastic transition from $\mathbf{z}_t$ to $\mathbf{z}_{t-1}$, parameterized by the SDE in Eq.~\eqref{eq:sde_rf} or~\eqref{eq:sde_ddpm}. The log-probability of a sampled trajectory is the sum of per-step log-probabilities under the Gaussian transition kernel. The GRPO objective with PPO-style clipping for a group of $W$ timesteps is:
\begin{equation}
\label{eq:grpo_loss}
    \mathcal{L}_{\text{GRPO}} = \frac{1}{W}\sum_{j=0}^{W-1} \max\left(-A_i\cdot\rho_j,\; -A_i\cdot\operatorname{clip}(\rho_j, 1-\epsilon_c, 1+\epsilon_c)\right),
\end{equation}
where $\rho_j = \frac{\pi_\theta(\mathbf{z}_{j+1}\mid\mathbf{z}_j)}{\pi_{\text{old}}(\mathbf{z}_{j+1}\mid\mathbf{z}_j)}$ is the probability ratio, $\epsilon_c$ is the clipping range (typically $0.2$), and $A_i$ is the advantage for sample $i$. The per-step log-probability under the SDE is computed as:
\begin{equation}
\label{eq:logprob}
    \log\pi_\theta(\mathbf{z}_{j+1}\mid\mathbf{z}_j) = -\frac{\|\mathbf{z}_{j+1} - \mu_\theta(\mathbf{z}_j, t_j)\|^2}{2\sigma_{t_j}^2} - \log(\sqrt{2\pi}\,\sigma_{t_j}),
\end{equation}
where $\mu_\theta(\mathbf{z}_j, t_j)$ is the predicted mean of the transition and $\sigma_{t_j}$ is the step-dependent standard deviation (the SDE diffusion coefficient at step $j$). Concretely, given model output $\hat{v}_j = v_\theta(\mathbf{z}_j, t_j)$, the mean is obtained by a single Euler step of the corresponding SDE: for rectified flow, $\mu_\theta(\mathbf{z}_j, t_j) = \mathbf{z}_j + \hat{v}_j\cdot(t_{j+1}-t_j)$; for $\boldsymbol{\epsilon}$-prediction, $\mu_\theta(\mathbf{z}_j,t_j) = \alpha_{t_{j+1}}(\mathbf{z}_j/\alpha_{t_j} - (\sigma_{t_j}/\alpha_{t_j}-\sigma_{t_{j+1}}/\alpha_{t_{j+1}})\hat{\boldsymbol{\epsilon}}_j)$. In both cases, $\log\pi_\theta$ is a differentiable function of the model output $\hat{v}_j$ (or $\hat{\boldsymbol{\epsilon}}_j$), which enables computing $s_j = \partial L_j/\partial \hat{v}_j$ via automatic differentiation.

\subsection{JAGG: Jacobian-Aggregated Group Gradient}

The key computational bottleneck in diffusion GRPO training is that the loss in Eq.~\eqref{eq:grpo_loss} requires computing $\partial\mathcal{L}_{\text{GRPO}}/\partial\theta$ through each of the $W$ timesteps. Each timestep requires a separate forward+backward pass through the denoising network, yielding $W$ backward passes per training group.

Our core insight is that the Jacobian $J(z,t) \equiv \partial v_\theta(z,t)/\partial\theta$ of the denoising velocity with respect to model parameters exhibits approximate linearity in the inputs $(z,t)$ during the high-noise denoising regime. Consequently, the Jacobian at intermediate timesteps can be accurately approximated by linear interpolation from the endpoint Jacobians $J_0 = J(z_0, t_0)$ and $J_{W-1} = J(z_{W-1}, t_{W-1})$.

\textbf{Algorithm.} JAGG processes a group of $W$ consecutive denoising timesteps through four phases:
\begin{figure}[ht]
    \centering
    \includegraphics[width=1.2\linewidth]{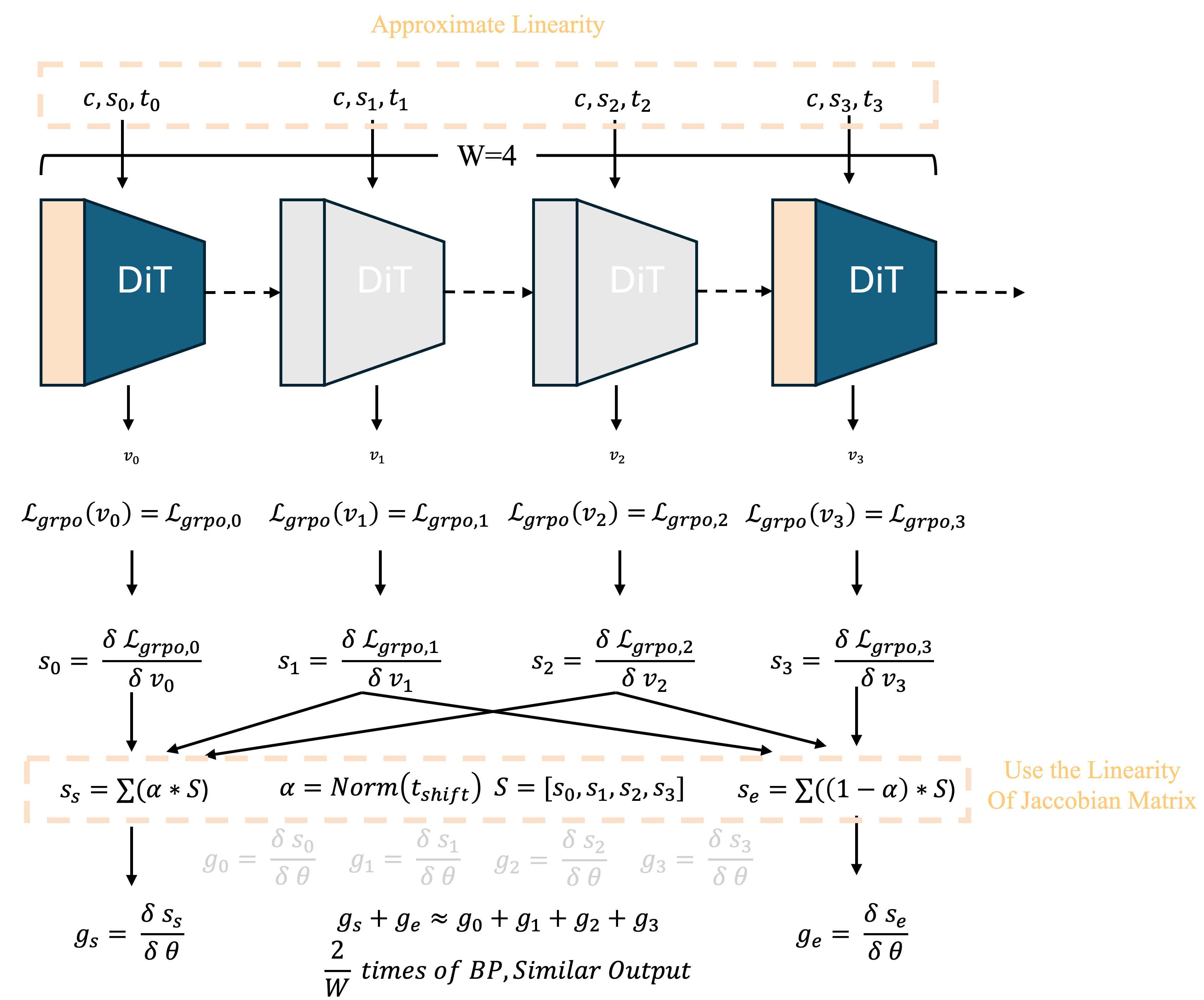}
    \caption{Overview of JAGG. A group of $W=4$ consecutive denoising steps are forward-passed through the DiT. Only the two endpoint steps (colored) maintain gradient graphs; intermediate steps use \texttt{no\_grad}. Upstream gradients $s_j = \partial L_j / \partial \hat{v}_j$ are computed cheaply via the lightweight SDE/PPO objective and then aggregated into $s_s$ and $s_e$ using timestep-normalized weights $\alpha$. Finally, a single joint backward pass from the two endpoints approximates the full $W$-step gradient at $2/W$ the cost.}
    \label{fig:jagg_overview}
\end{figure}
\emph{Phase 1 --- Forward passes.} We run forward passes for all $W$ steps. Only the two endpoints (step $0$ and step $W-1$) are executed with gradient tracking to build computation graphs. The middle $W-2$ steps use $\texttt{torch.no\_grad()}$ for cheap forward evaluation:
\[
    \hat{v}_j = \begin{cases}
        \text{DiT}(z_j, t_j),\; \text{with grad}, & j \in \{0, W-1\} \\
        \text{DiT}(z_j, t_j),\; \text{no\_grad}, & j \in \{1,\ldots,W-2\}.
    \end{cases}
\]

\emph{Phase 2 --- Upstream gradient computation.} For each step $j$, we create a detached proxy $\tilde{v}_j = \hat{v}_j.\texttt{detach}().\texttt{requires\_grad\_}()$ and compute the per-step loss $L_j$ and its upstream gradient $s_j = \partial L_j / \partial \tilde{v}_j$ via automatic differentiation on the proxy. Crucially, this step involves \emph{no transformer backward pass}---the gradient is taken through only the lightweight SDE step (Eq.~\eqref{eq:sde_rf}) and the PPO clipped objective (Eq.~\eqref{eq:grpo_loss}).

Here $\mathbf{z}_j$ is treated as a \emph{fixed sample} carrying no gradient, so $L_j$ is a purely local function of $\tilde{v}_j$ (the per-step REINFORCE approximation that truncates cross-step Markov dependencies). The \emph{exact baseline} in our ablation applies the same detach strategy, so the comparison isolates the Jacobian interpolation approximation rather than any difference in gradient decoupling.

\noindent\textbf{Remark on PPO clipping compatibility.} The PPO-clip objective sets the surrogate loss to a constant whenever the probability ratio $r_j = \pi_\theta/\pi_{\theta_\text{old}}$ falls outside $[1-\varepsilon, 1+\varepsilon]$, making $\partial L_j/\partial \tilde{v}_j = 0$ (i.e., $s_j = \mathbf{0}$) for those clipped steps. Because $s_j$ enters the aggregated signals $s_{\text{base}}$ and $s_{\text{corr}}$ only as a linear summand (Eq.~\eqref{eq:s_base_corr}), a zero $s_j$ contributes nothing to either endpoint backward in Phase 4---exactly reproducing the effect of the original per-step clip, which would also produce a zero weight-gradient at that step. JAGG therefore inherits the PPO clipping semantics without any special handling: clipped steps are automatically masked out through the zero upstream gradient, and the remaining unclipped steps drive the parameter update.

\emph{Phase 3 --- Timestep-weighted aggregation.} The upstream gradients $s_j$ are aggregated into two composite signals using timestep-based interpolation weights. Let $t_j$ denote the (possibly shift-adjusted) timestep fed into the network at step $j$. The interpolation coefficient is:
\begin{equation}
\label{eq:alpha}
    \alpha_j = \frac{t_0 - t_j}{t_0 - t_{W-1}},
\end{equation}
with $\alpha_j \in [0,1]$ monotonic along the denoising trajectory. Using $t_j$---the actual input coordinate of $v_\theta(z,t)$---rather than $\sigma_j$ keeps the interpolation parameter affine in the same coordinate that Theorem~\ref{thm:linear_interp} assumes, so the choice remains exact under timestep shift schedules where $\sigma_t$ is a nonlinear function of $t$. The aggregated signals are:
\begin{equation}
\label{eq:s_base_corr}
    s_{\text{base}} = \sum_{j=0}^{W-1} (1-\alpha_j)\, s_j,\qquad
    s_{\text{corr}} = \sum_{j=0}^{W-1} \alpha_j\, s_j.
\end{equation}

\emph{Phase 4 --- Joint backward.} A single $\texttt{torch.autograd.backward}$ call is issued on $[\hat{v}_0, \hat{v}_{W-1}]$ with gradients $[s_{\text{base}}, s_{\text{corr}}]$. This triggers exactly one FSDP all-gather per module, avoiding the double all-gather issue that would arise from two sequential $\texttt{.backward()}$ calls.

\textbf{Computational cost.} The cost comparison is summarized below:

\vspace{4pt}
\begin{center}
\begin{tabular}{lcc}
    \hline
    \textbf{Method} & \textbf{Transformer fwd} & \textbf{Transformer bwd} \\
    \hline
    Standard per-step & $W$ & $W$ \\
    JAGG (ours) & $W$ & $\mathbf{2}$ (1 joint call) \\
    \hline
\end{tabular}
\end{center}
\vspace{4pt}

For $W=4$, JAGG provides a $\sim$2$\times$ backward speedup; larger $W$ yields proportionally greater gains. In the full ``graph-reuse'' implementation, the total transformer passes drop from $2W$ to $W+1$, as the $W-2$ middle forward passes are no-grad. Because each transformer backward is inherently $\sim$2$\times$ more expensive than a forward pass and triggers a collective communication under data parallelism, this $W{:}2$ reduction translates into a substantial wall-clock improvement (detailed in Appendix~\ref{app:why-bwd-speedup}).

\textbf{High-noise vs.\ low-noise scheduling.} The linearity assumption is most accurate in the early, high-noise regime. We therefore introduce a \texttt{jagg\_frac} parameter that applies JAGG to the first fraction of denoising groups and uses exact per-step backward for the rest. Its value is derived empirically from the gradient quality analysis in the ``\nameref{sec:empirical-quality}'' section: for T2I models (Flux, QwenImage) we use \texttt{jagg\_frac}$=0.6$.

\subsection{Theoretical Analysis}

We now formally prove that under linear velocity conditions, the $t$-weighted Jacobian interpolation is exact. This provides the theoretical foundation for JAGG's approximation.

\begin{theorem}[Exact Jacobian Interpolation]
\label{thm:linear_interp}
Assume the velocity field $v_\theta(z,t)$ is linear in the inputs $(z,t)$, i.e., all second-order and higher partial derivatives with respect to $(z,t)$ vanish. Let $\{(z_j, t_j)\}_{j=0}^{W-1}$ lie on an equidistant interpolation path between $(z_0, t_0)$ and $(z_{W-1}, t_{W-1})$ such that $\Delta z_j = \alpha_j \Delta z_{W-1}$ and $\Delta t_j = \alpha_j \Delta t_{W-1}$ with $\alpha_j = j/(W-1)$. Then the Jacobian $J_j = \partial v_\theta(z_j,t_j)/\partial\theta$ satisfies:
\begin{equation}
\label{eq:theorem}
    J_j = (1-\alpha_j)J_0 + \alpha_j J_{W-1}.
\end{equation}
\end{theorem}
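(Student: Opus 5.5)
The plan is to exploit the linearity hypothesis to write $v_\theta$ in closed form as an affine function of $(z,t)$ whose coefficients depend only on $\theta$. Differentiating in $\theta$ then gives a Jacobian that is itself affine in $(z,t)$. From there, affine maps commute with convex combinations, and that finishes the argument.

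\textbf{Step 1: affine form of $v_\theta$.} Since all second and higher partial derivatives of $v_\theta$ in $(z,t)$ vanish for each fixed $\theta$, Taylor's theorem with remainder (equivalently, integrating a constant gradient along segments) gives
\[
v_\theta(z,t) = a(\theta) + B(\theta)\,z + c(\theta)\,t
\]
on the (convex) domain containing the path. Here $a(\theta) = v_\theta(0,0)$ (or its value at a base point), $B(\theta) = \partial_z v_\theta$ and $c(\theta) = \partial_t v_\theta$, and all three are constant in $(z,t)$. I will assume $v_\theta$ is jointly $C^1$ in $(\theta,z,t)$, so that these coefficients are differentiable in $\theta$. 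This can be seen by writing $a,B,c$ as evaluations and first derivatives of $v_\theta$ at fixed points.

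\textbf{Step 2: affine form of the Jacobian.} Differentiating the affine form in $\theta$ gives
\[
J(z,t) = \partial_\theta a + (\partial_\theta B)\,z + (\partial_\theta c)\,t,
\]
which is again affine in $(z,t)$ with $\theta$-dependent coefficients. The inputs $z_j,t_j$ are fixed samples carrying no $\theta$-dependence, consistent with the detach convention of Phase 2, so no chain-rule term through $z_j$ appears.

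\textbf{Step 3: apply the path hypothesis.} By assumption, $z_j = z_0 + \alpha_j(z_{W-1}-z_0)$ and $t_j = t_0 + \alpha_j(t_{W-1}-t_0)$. Equivalently, $(z_j,t_j) = (1-\alpha_j)(z_0,t_0) + \alpha_j(z_{W-1},t_{W-1})$. Substituting into the affine form of $J$ and splitting the constant term as $(1-\alpha_j)\partial_\theta a + \alpha_j\partial_\theta a$ gives exactly Eq.~\eqref{eq:theorem}. The same computation shows that this $\alpha_j$ coincides with the $t$-based definition in Eq.~\eqref{eq:alpha} whenever $t_0\neq t_{W-1}$, linking the theorem to Phase 3.

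\textbf{Main obstacle.} The only delicate point is Step 1–2: justifying that $\partial_\theta$ of an affine-in-$(z,t)$ function stays affine in $(z,t)$. This requires the linearity hypothesis to hold for every $\theta$ in a neighbourhood, not merely at the current parameters. If it held only at a single $\theta$, mixed derivatives $\partial_\theta\partial_z^2 v$ need not vanish. I would therefore state the hypothesis for all $\theta$ in an open set and invoke equality of mixed partials (Schwarz) to get $\partial_{(z,t)}^2 J = \partial_\theta \partial_{(z,t)}^2 v = 0$. This gives the affine property of $J$ directly, without needing explicit coefficients.
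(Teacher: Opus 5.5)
Your proposal is correct and follows essentially the paper's argument: truncate the Taylor expansion exactly at first order in $(z,t)$, differentiate in $\theta$ with $(z_j,t_j)$ held fixed, and substitute the affine path (the paper expands about $(z_0,t_0)$ and identifies the bracket as $J_{W-1}-J_0$, whereas you expand about a generic base point and use that affine maps preserve convex combinations, which is only a cosmetic difference). Your point that the linearity hypothesis must hold for all $\theta$ in a neighbourhood is a useful sharpening, since the paper relies on it tacitly when it differentiates $A_z$ and $A_t$ with respect to $\theta$.
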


\begin{proof}
Since $v_\theta$ is linear in $(z,t)$, its Taylor expansion truncates exactly at first order:
\begin{equation}
    v_\theta(z_j, t_j) = v_\theta(z_0, t_0) + A_z\cdot\Delta z_j + A_t\cdot\Delta t_j,
\end{equation}
where $A_z = \partial v_\theta/\partial z|_0$ and $A_t = \partial v_\theta/\partial t|_0$ are the partial derivatives with respect to the inputs $(z,t)$, not $\theta$. Taking the derivative with respect to $\theta$ on both sides:
\begin{equation}
    J_j = J_0 + \frac{\partial A_z}{\partial\theta}\cdot\Delta z_j + \frac{\partial A_t}{\partial\theta}\cdot\Delta t_j.
\end{equation}
Under the equidistant path condition, $\Delta z_j = \alpha_j \Delta z_{W-1}$ and $\Delta t_j = \alpha_j \Delta t_{W-1}$. Substituting:
\begin{align}
    J_j &= J_0 + \alpha_j\left(\frac{\partial A_z}{\partial\theta}\cdot\Delta z_{W-1} + \frac{\partial A_t}{\partial\theta}\cdot\Delta t_{W-1}\right)\\
    &= J_0 + \alpha_j(J_{W-1} - J_0) \\
    &= (1-\alpha_j)J_0 + \alpha_j J_{W-1}. \qedhere
\end{align}
\end{proof}

\textbf{Bridging the assumption to real diffusion trajectories.} The equidistant-path assumption in Theorem~\ref{thm:linear_interp} is not as restrictive as it may first appear in our setting. For rectified flow, the forward marginal $z_t = (1-t)x + t\boldsymbol{\epsilon}$ is itself \emph{exactly affine} in $t$, so along a denoising rollout that is well-aligned with the learned vector field the joint $(z_j, t_j)$ is close to the affine path between the two grouped endpoints---particularly in the early, high-noise regime where the trajectory has not yet committed to fine semantic structure. The $t$-weighted choice in Eq.~\eqref{eq:alpha} is precisely the coordinate in which both this geometric assumption and the linearity assumption are most natural. The ``\nameref{sec:empirical-quality}'' section gives the direct empirical test: we measure the gap between the JAGG-aggregated gradient and the exact per-step gradient on real Flux and QwenImage rollouts and find that the assumption holds well in the high-noise groups (cosine $\geq 0.7$) and degrades smoothly thereafter. The \texttt{jagg\_frac} routing rule is the engineering response to this graceful degradation: JAGG is deployed exactly on the groups where Theorem~\ref{thm:linear_interp}'s premise is approximately satisfied.

\textbf{Remark on timestep-shift schedules.} Modern flow-matching backbones (Flux, QwenImage) feed a \emph{shifted} timestep $t' = \mathrm{shift}(t)$ into the network, in which case $\sigma_t$ becomes a nonlinear function of $t'$. Defining $\alpha_j$ with respect to the actual network input $t_j$ (rather than $\sigma_j$) keeps the interpolation parameter affine in the same coordinate that Theorem~\ref{thm:linear_interp} assumes, so the exactness statement holds uniformly across schedules. For a uniform $t$-spacing without shift, $\alpha_j$ reduces to $j/(W-1)$, recovering the classical equidistant case.

\textbf{Corollary (gradient exactness).} Substituting Eq.~\eqref{eq:theorem} into the exact per-step gradient $\sum_{j=0}^{W-1} J_j^{\top} s_j$ and regrouping by the definitions of $s_{\text{base}}, s_{\text{corr}}$ in Eq.~\eqref{eq:s_base_corr} yields
\begin{equation}
    \nabla_\theta\mathcal{L}_{\text{GRPO}} = J_0^{\top}\, s_{\text{base}} + J_{W-1}^{\top}\, s_{\text{corr}},
\end{equation}
i.e., the two-endpoint backward pass produced by JAGG returns the \emph{exact} group gradient whenever the assumption of Theorem~\ref{thm:linear_interp} holds.

\subsection{Memory-Efficient Implementation Variants}
\label{sec:impl}

The joint-backward variant described above (Algorithm~\ref{alg:jagg-joint}) builds computation graphs for both endpoints simultaneously and issues a single \texttt{torch.autograd.backward} call, which is optimal for FSDP communication. However, for very large models with billions of parameters, two live computation graphs may exceed GPU memory budget. We therefore provide a second \textbf{sequential-graph} variant (Algorithm~\ref{alg:jagg-seq}) that processes the two endpoint graphs one at a time:

\begin{enumerate}
  \item Run no-grad forward for the middle $W{-}2$ steps and accumulate partial $s_{\text{base}}$/$s_{\text{corr}}$ contributions.
  \item Run with-grad forward for $j{=}0$; compute $s_0$ via proxy and add to $s_{\text{base}}$; call $\hat{v}_0.\texttt{backward}(s_{\text{base}})$ immediately. The graph for step 0 is freed before step $W{-}1$ is ever built.
  \item Run with-grad forward for $j{=}W{-}1$; compute $s_{W-1}$ via proxy and add to $s_{\text{corr}}$; call $\hat{v}_{W-1}.\texttt{backward}(s_{\text{corr}})$.
\end{enumerate}

The sequential variant keeps at most \emph{one} computation graph in memory at any time, at the cost of two separate FSDP all-gathers (one per backward). In practice we use the joint variant for T2I models, reserving the sequential variant for cases where peak memory is critical. Both variants produce identical gradients and achieve the same $\sim$2$\times$ backward speedup. The full pseudocode for both variants is provided in the appendix (Algorithms~\ref{alg:jagg-joint} and~\ref{alg:jagg-seq}).

\section{Experiments}

We evaluate JAGG on text-to-image (T2I) benchmarks, comparing against the standard per-step GRPO baseline and an ablation variant (\textbf{Ablate}). The Ablate variant performs backward passes only on the first ($j{=}0$) and last ($j{=}W{-}1$) velocity predictions within each group---identical to JAGG in backward count---but assigns the full loss signal to each endpoint independently, without any Jacobian interpolation or aggregation of the intermediate steps. Concretely, it computes $s_0$ and $s_{W-1}$ only (discarding $s_1,\ldots,s_{W-2}$) and issues a joint backward with equal weighting, rather than using $t$-weighted aggregation. The same \texttt{jagg\_frac} schedule is applied, making this a controlled ablation that isolates the contribution of the two-endpoint Jacobian interpolation. All experiments use a window size of $W=4$ and \texttt{jagg\_frac}$=0.6$ unless noted. We report HPSv2.1~\cite{wu2023humanpreferencescorev2}, PickScore~\cite{kirstain2023pickapic}, and CLIPScore~\cite{hessel2022clipscorereferencefreeevaluationmetric} as evaluation metrics, along with wall-clock step time to quantify training efficiency.

\noindent\textbf{Setup.} T2I models (QwenImage, Flux) are trained on \textbf{8 GPUs} each with \textbf{270\,GB} of memory and connected via high-speed interconnect. All inference is performed on the same \textbf{8 GPUs}. T2I quality metrics are computed on the \textbf{official HPSv2.1 test set}~\cite{wu2023humanpreferencescorev2}.

\subsection{Text-to-Image Generation based on DanceGRPO}

Based on DanceGRPO~\cite{xue2025dancegrpounleashinggrpovisual}, we first conduct a lightweight smoke test on QwenImage~\cite{wu2025qwenimagetechnicalreport} using only the HPSv2.1 reward, checking whether JAGG reproduces the reward-improvement trend of standard GRPO before moving to the mixed-reward experiments.

\noindent\textbf{QwenImage HPSv2.1-only smoke test.} Table~\ref{tab:qwen_hps} reports HPSv2.1 scores at evaluation checkpoints (reward curve in Figure~\ref{fig:qwen_hps_smoke}, ``\nameref{sec:appendix-exp-figs}'' appendix). JAGG closely tracks the baseline.

\begin{table}[htbp]
\centering
\small
\caption{QwenImage smoke test (HPSv2.1 reward only). Bold = best per step.}
\label{tab:qwen_hps}
\begin{tabular}{lcc}
\hline
\textbf{Step} & \textbf{Baseline} & \textbf{JAGG (ours)} \\
\hline
0  & 0.3199 & 0.3199 \\
25 & 0.3199 & 0.3199 \\
50 & 0.3447 & 0.3354 \\
75 & 0.3462 & \textbf{0.3463} \\
\hline
\end{tabular}
\end{table}

\noindent\textbf{QwenImage mixed rewards.} We then train QwenImage with GRPO using HPSv2.1 and CLIPScore rewards jointly (1:1 ratio). Figure~\ref{fig:qwen_mixed_stepwise} (deferred to the ``\nameref{sec:appendix-exp-figs}'' appendix) shows the mixed-reward training progression. JAGG reduces the mean per-step wall-clock time from \textbf{835\,s} (baseline) to \textbf{691\,s}---a \textbf{17.3\% speedup}.

Per-checkpoint quantitative results for the QwenImage mixed-reward setting are summarized in Table~\ref{tab:qwen_mixed} (deferred to the ``\nameref{sec:appendix-exp-figs}'' appendix). Figure~\ref{fig:qwen_mixed_reward_mean} plots the mean of HPSv2.1, PickScore, and CLIPScore on the evaluation set over training. JAGG closely tracks the baseline throughout training and matches or exceeds it at convergence. The ablation variant consistently underperforms, confirming the necessity of two-endpoint Jacobian aggregation.

\begin{figure}[h]
    \centering
    \includegraphics[width=0.75\linewidth]{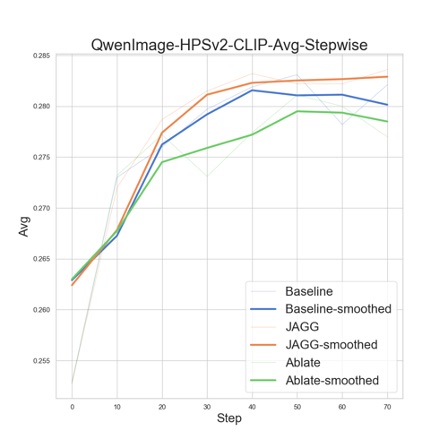}
    \caption{QwenImage trained on mixed HPSv2.1+CLIPScore rewards. Mean of HPSv2.1, PickScore, and CLIPScore on the evaluation set vs.\ training steps for Baseline and JAGG. Per-metric numerical results at each checkpoint are reported in Table~\ref{tab:qwen_mixed} in the ``\nameref{sec:appendix-exp-figs}'' appendix.}
    \label{fig:qwen_mixed_reward_mean}
\end{figure}

\noindent\textbf{Flux mixed rewards.} We additionally evaluate on the Flux flow matching model~\cite{flux2024,labs2025flux1kontextflowmatching} with the same HPSv2.1+CLIPScore reward setup. Figure~\ref{fig:flux_mixed_dancegrpo} (deferred to the ``\nameref{sec:appendix-exp-figs}'' appendix) shows the DanceGRPO mixed-reward training curve for Flux. JAGG closely tracks the baseline trajectory, confirming that our method generalizes across different diffusion architectures.

Numerical results for Flux are reported in Table~\ref{tab:flux_main}. Both methods converge to comparable reward levels, with JAGG slightly ahead on HPSv2.1 and PickScore at step 80. In terms of training efficiency, JAGG reduces the mean per-step wall-clock time from \textbf{279\,s} (baseline) to \textbf{228\,s}---an \textbf{18.2\% speedup}.

\begin{table}[h]
\centering
\small
\caption{Flux evaluation (HPSv2.1+CLIPScore training) at step 80.}
\label{tab:flux_main}
\begin{tabular}{lccc}
\hline
\textbf{Method} & \textbf{HPSv2.1} & \textbf{PickScore} & \textbf{CLIPScore} \\
\hline
Baseline (GRPO) & 0.3157 & 0.2252 & 0.2724 \\
JAGG (ours) & \textbf{0.3216} & \textbf{0.2254} & 0.2712 \\
\hline
\end{tabular}
\end{table}

\subsection{Efficiency Analysis on DanceGRPO}

Under DanceGRPO, JAGG reduces the mean per-step wall-clock time from \textbf{279\,s $\rightarrow$ 228\,s} on Flux (\textbf{18.2\%} speedup) and from \textbf{835\,s $\rightarrow$ 691\,s} on QwenImage (\textbf{17.3\%} speedup). The core efficiency gain of JAGG is the reduction of transformer backward passes from $W$ per group to $2$; for $W=4$ this yields a $\sim$2$\times$ backward-pass reduction. Two factors further amplify each backward pass beyond its raw FLOPs---(i) the backward of a transformer layer costs roughly $2\times$ its forward, and (ii) every backward triggers an all-reduce/reduce-scatter over the full parameter gradient tensor under DP/FSDP. A detailed derivation of these two effects (attention-backward FLOP accounting via Eqs.~\eqref{eq:bwd_V}--\eqref{eq:bwd_QK} and the communication analysis) is provided in Appendix~\ref{app:why-bwd-speedup}.

\subsection{Text-to-Image Generation based on Flow-GRPO-Fast}

To validate the robustness of our method accross different RL Algorithms for DiT models, we also apply JAGG to the Flow-GRPO-Fast algorithm~\cite{liu2025flowgrpotrainingflowmatching}. We trained different recipe on QwenImage~\cite{wu2025qwenimagetechnicalreport} and Flux1.dev~\cite{liu2025flowgrpotrainingflowmatching} models with HPSv2.1 and CLIPScore rewards (1:1 ratio). In this experiment, we run our ablation method on Flux1.dev. 

The training recipe is almost the same as the ones in DanceGRPO. Figure \ref{fig:flow_fast_run_stepwise} is the reward curve of the training of two models.

Figure~\ref{fig:flow_fast_flux_eval} reports the evaluation results measured every 10 training steps for Flux1.dev, and Figure~\ref{fig:flow_fast_qwenimage_eval} shows the corresponding results for QwenImage.

\begin{figure}[htbp]
    \centering
    \includegraphics[width=1\linewidth]{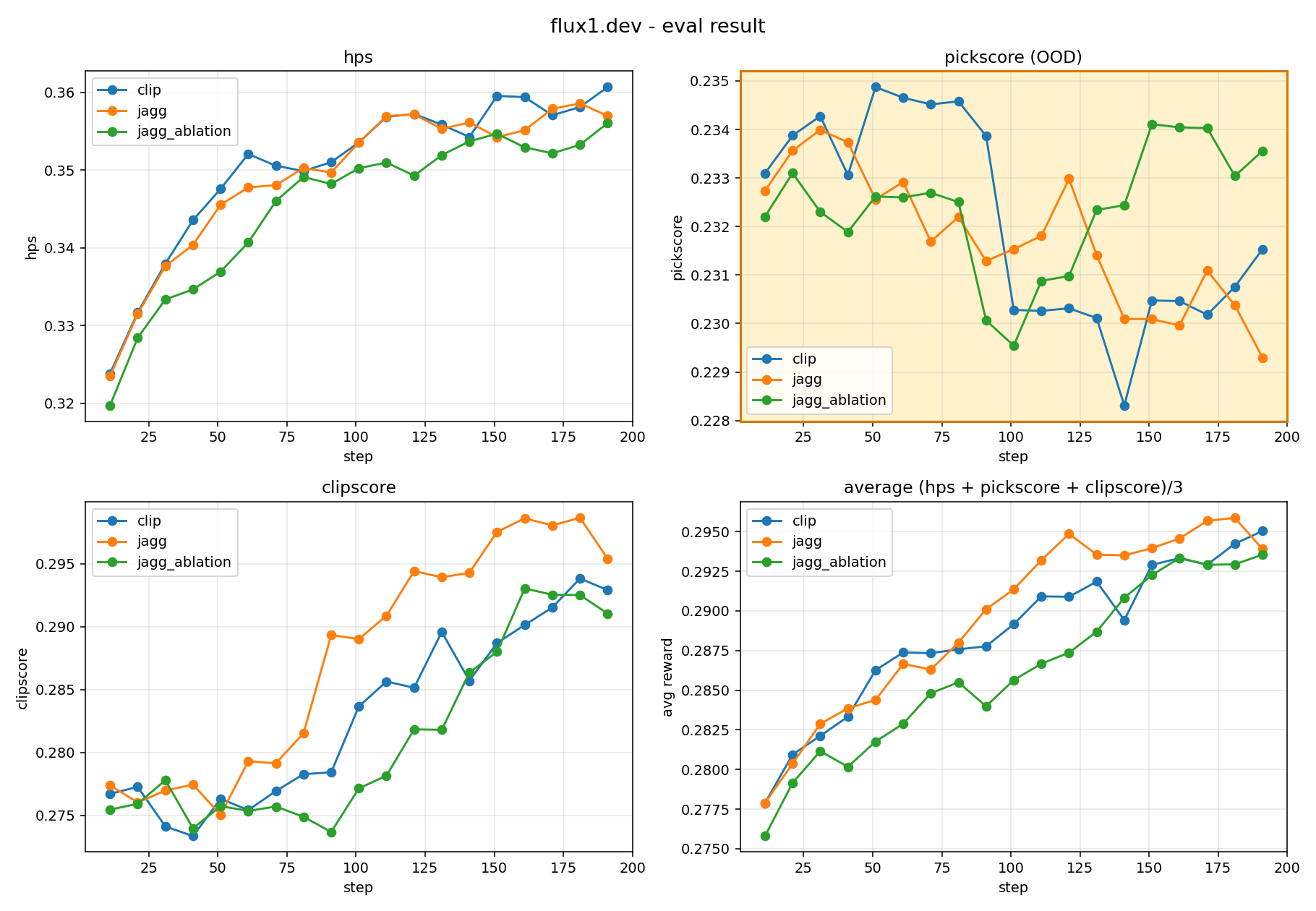}
    \caption{Flux1.dev validate reward curve trained using Flow-GRPO-Fast algorithm under different settings (w JAGG, w/o JAGG, Ablation).}
    \label{fig:flow_fast_flux_eval}
\end{figure}

\begin{figure}[htbp]
    \centering
    \includegraphics[width=1\linewidth]{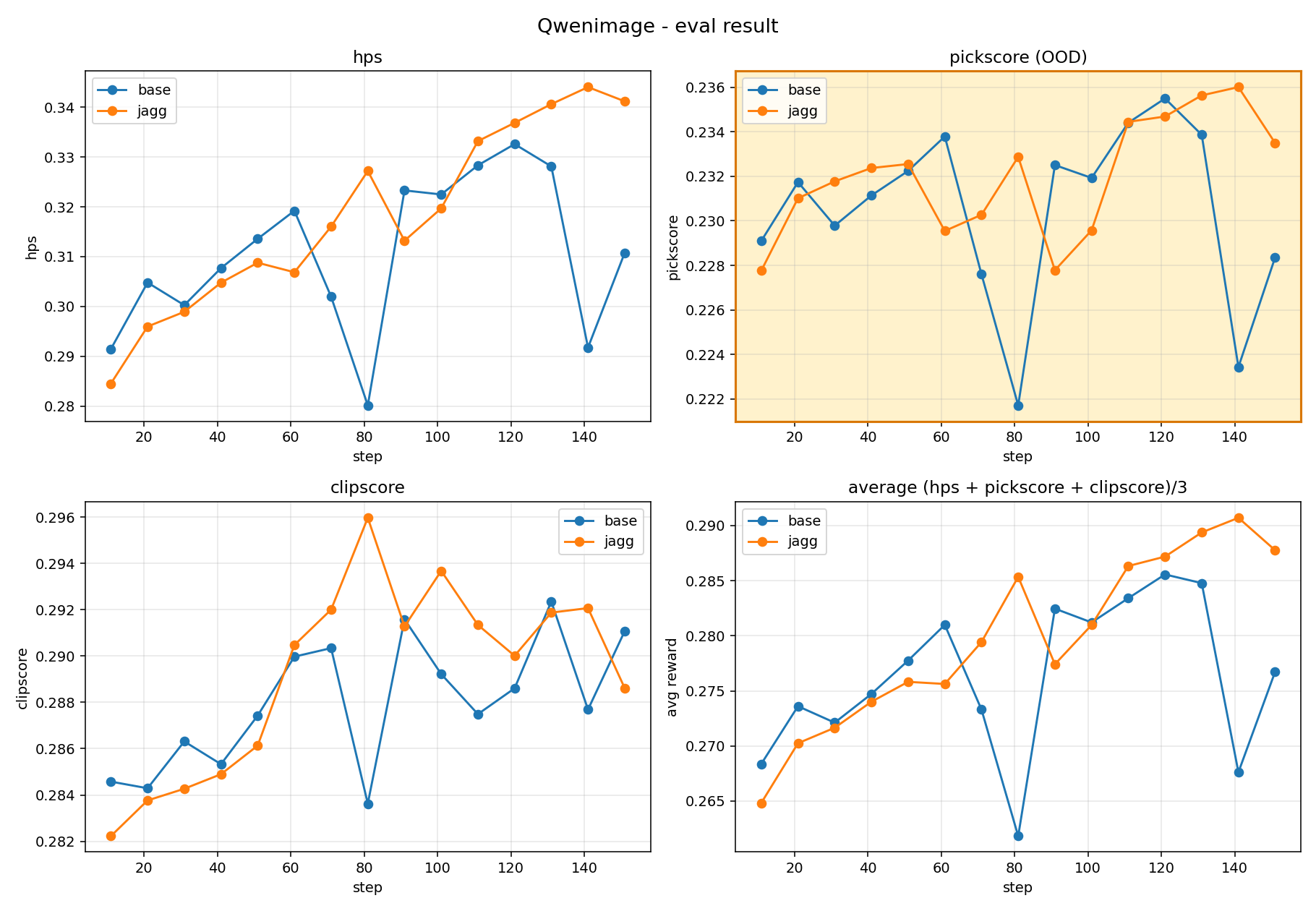}
    \caption{QwenImage validate reward curve trained using Flow-GRPO-Fast algorithm under different settings (w JAGG, w/o JAGG, Ablation).}
    \label{fig:flow_fast_qwenimage_eval}
\end{figure}

Overall, the Flow-GRPO-Fast experiments show that JAGG preserves the reward optimization behavior of the original algorithm while improving the effectiveness of the policy update. For Flux1.dev, JAGG closely follows the non-accelerated baseline on HPSv2.1 and achieves a clear advantage on CLIPScore in the middle and later stages of training, which leads to the best average reward over most checkpoints. The ablation variant is consistently weaker than full JAGG, especially on the averaged metric, indicating that simply back-propagating through two endpoints is insufficient; the intermediate-step gradient information recovered by Jacobian aggregation is important for stable reward improvement. PickScore exhibits larger fluctuations because it is evaluated in an out-of-distribution setting, but JAGG remains comparable to the baseline and does not introduce systematic degradation.

For QwenImage, the benefit of JAGG is even more pronounced in the later training stage. Although the baseline initially improves faster, its HPSv2.1, PickScore, and CLIPScore curves start to decline after the middle checkpoints, suggesting a less stable optimization trajectory. In contrast, JAGG continues to improve or remain stable after step 80 and eventually obtains substantially higher HPSv2.1, PickScore, and average reward. This trend suggests that JAGG is not merely a computational shortcut: by aggregating the gradient signals across the denoising window, it can produce a smoother and more reliable update direction under Flow-GRPO-Fast as well. Together with the Flux1.dev results, these curves demonstrate that JAGG generalizes beyond DanceGRPO and remains effective across different DiT backbones and RL training recipes.

\subsection{Efficiency Analysis on Flow-GRPO-Fast}

We measure the wall-clock cost of one training iteration under the Flow-GRPO-Fast recipe on both QwenImage and Flux1.dev. Figure~\ref{fig:efficiency_flowfast} reports both the end-to-end mean per-step time (left) and the per-step \emph{update} time (right). Here ``update'' refers to the full policy-update stage that JAGG targets: it consists of (i) a forward pass on the stored experience (rolled-out latents and old log-probabilities) to reconstruct the current-policy log-probabilities and the surrogate loss, and (ii) the backward pass that produces the parameter gradients. JAGG only alters this update stage---the rollout, reward evaluation, optimizer step, and communication schedule are unchanged.

JAGG reduces the mean step time from \textbf{674\,s $\rightarrow$ 556\,s} on QwenImage (17.5\% speedup) and \textbf{316\,s $\rightarrow$ 272\,s} on Flux1.dev (13.9\% speedup). The update time alone drops from \textbf{393\,s $\rightarrow$ 275\,s} on QwenImage (30.0\% speedup) and \textbf{144\,s $\rightarrow$ 101\,s} on Flux1.dev (29.9\% speedup). The much larger reduction on the update stage relative to the full step confirms that the savings come from replacing the per-timestep forward+backward passes with JAGG's two-endpoint schedule.

\begin{figure}[htbp]
    \centering
    \includegraphics[width=0.48\linewidth]{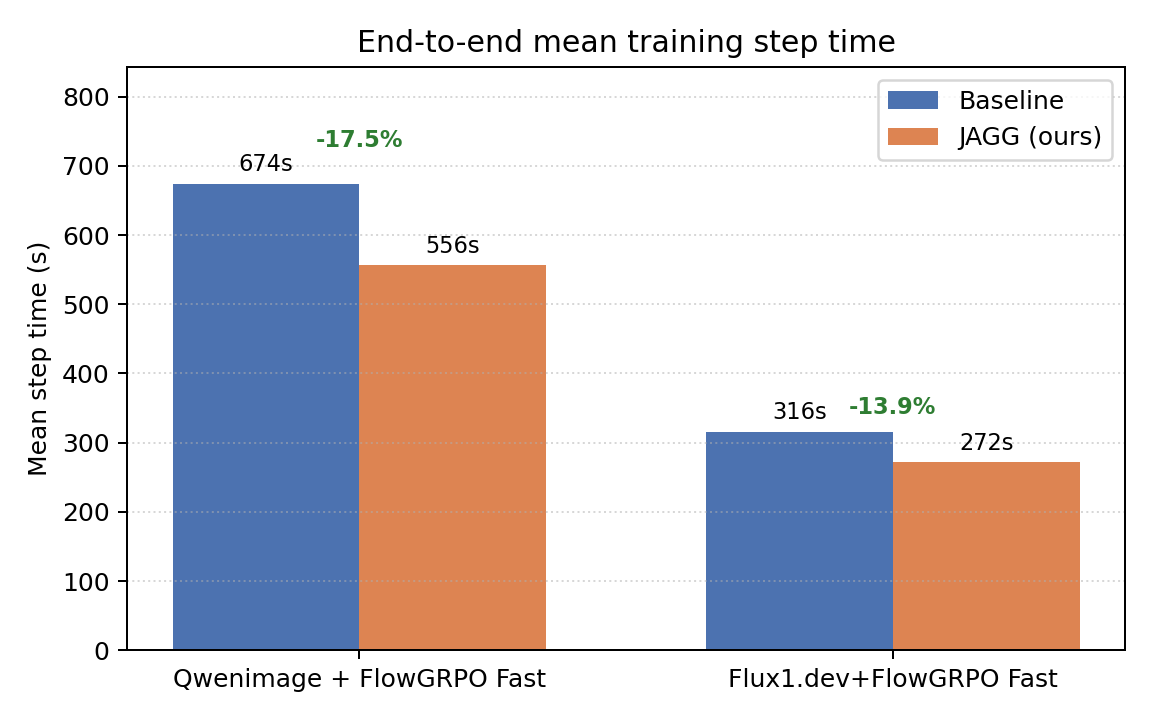}\hfill\includegraphics[width=0.48\linewidth]{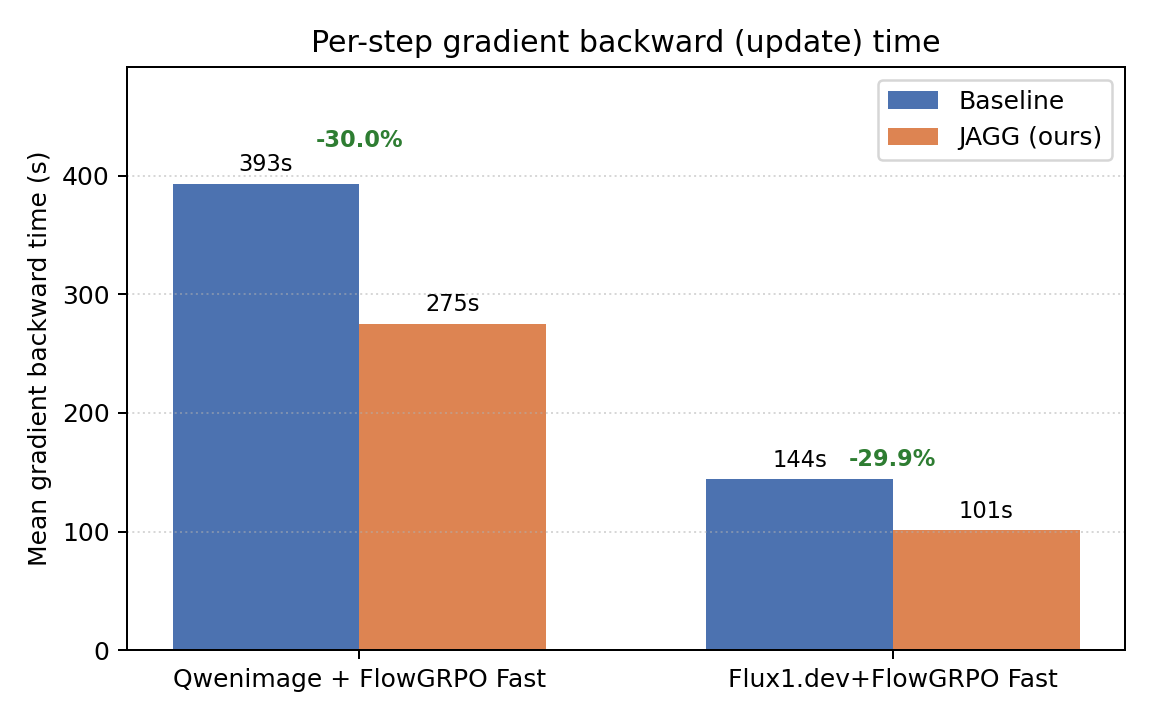}
    \caption{\textbf{Flow-GRPO-Fast + JAGG:} \textbf{Left:} end-to-end mean training step time (seconds). \textbf{Right:} per-step update time (forward on stored experience + backward), which is the stage JAGG directly targets. Percentages above the bars indicate the wall-clock speedup of JAGG over the baseline.}
    \label{fig:efficiency_flowfast}
\end{figure}

The end-to-end wall-clock numbers for Flow-GRPO-Fast in Figure~\ref{fig:efficiency_flowfast} are also consistent with the DanceGRPO speedups reported above; the underlying FLOP and communication analysis is shared and detailed in Appendix~\ref{app:why-bwd-speedup}.

\subsection{Analysis}

\noindent\textbf{Why Two Endpoints Outperforms.}
The ablation results consistently show that JAGG outperforms the endpoint-only variant. This is expected: interpolating intermediate gradients recovers more information than discarding them. Additionally, JAGG's linear interpolation of Jacobians effectively marginalizes out stochastic noise perturbations that cause per-step fluctuations, analogous to replacing an SDE trajectory with its deterministic ODE skeleton~\cite{li2026mixgrpounlockingflowbasedgrpo}, which may explain why JAGG occasionally exceeds the exact baseline.

\noindent\textbf{Effect of \texttt{jagg\_frac}.}
The \texttt{jagg\_frac} hyperparameter controls what fraction of denoising groups use JAGG versus exact per-step backward. We set it via the $\cos \geq 0.5$ threshold from the gradient quality analysis (``\nameref{sec:empirical-quality}'' section): $\texttt{jagg\_frac}=0.6$ for T2I models.

\section{Conclusion}

We presented JAGG, a Jacobian-aggregation framework for accelerating GRPO training of diffusion models. By exploiting the near-linearity of velocity Jacobians in the high-noise regime, JAGG reduces backward passes from $W$ to $2$ per group of $W$ timesteps, achieving consistent wall-clock speedups (17--18\% per training step) with minimal quality degradation. We provided a rigorous proof that the $t$-weighted interpolation is exact under linear conditions, and demonstrated the generality of our method across multiple text-to-image diffusion backbones. JAGG represents a significant step toward making large-scale diffusion GRPO training practical, and we hope it enables more widespread adoption of RL-based alignment for visual generative models.

\FloatBarrier
\bibliography{refs}

\begin{thebibliography}{47}
\providecommand{\natexlab}[1]{#1}

\bibitem[{Albergo, Boffi, and
  Vanden-Eijnden(2025)}]{albergo2025stochasticinterpolantsunifyingframework}
Albergo, M.~S.; Boffi, N.~M.; and Vanden-Eijnden, E. 2025.
\newblock Stochastic Interpolants: A Unifying Framework for Flows and
  Diffusions.
\newblock arXiv:2303.08797.

\bibitem[{Albergo and
  Vanden-Eijnden(2023)}]{albergo2023buildingnormalizingflowsstochastic}
Albergo, M.~S.; and Vanden-Eijnden, E. 2023.
\newblock Building Normalizing Flows with Stochastic Interpolants.
\newblock arXiv:2209.15571.

\bibitem[{Black et~al.(2024)Black, Janner, Du, Kostrikov, and
  Levine}]{black2024trainingdiffusionmodelsreinforcement}
Black, K.; Janner, M.; Du, Y.; Kostrikov, I.; and Levine, S. 2024.
\newblock Training Diffusion Models with Reinforcement Learning.
\newblock arXiv:2305.13301.

\bibitem[{Bottou, Curtis, and
  Nocedal(2018)}]{bottou2018optimizationmethodslargescalemachine}
Bottou, L.; Curtis, F.~E.; and Nocedal, J. 2018.
\newblock Optimization Methods for Large-Scale Machine Learning.
\newblock arXiv:1606.04838.

\bibitem[{Chen et~al.(2024{\natexlab{a}})Chen, Wang, Wu, Liao, Sun, Yan, and
  Lin}]{10.1007/978-3-031-72698-9_11}
Chen, C.; Wang, A.; Wu, H.; Liao, L.; Sun, W.; Yan, Q.; and Lin, W.
  2024{\natexlab{a}}.
\newblock Enhancing Diffusion Models with Text-Encoder Reinforcement Learning.
\newblock In \emph{Computer Vision – ECCV 2024: 18th European Conference,
  Milan, Italy, September 29–October 4, 2024, Proceedings, Part XXV},
  182–198. Berlin, Heidelberg: Springer-Verlag.
\newblock ISBN 978-3-031-72697-2.

\bibitem[{Chen et~al.(2024{\natexlab{b}})Chen, Shen, Ye, Cao, Tu, Bouganis,
  Zhao, and Chen}]{chen2024deltadittrainingfreeaccelerationmethod}
Chen, P.; Shen, M.; Ye, P.; Cao, J.; Tu, C.; Bouganis, C.-S.; Zhao, Y.; and
  Chen, T. 2024{\natexlab{b}}.
\newblock $\Delta$-DiT: A Training-Free Acceleration Method Tailored for
  Diffusion Transformers.
\newblock arXiv:2406.01125.

\bibitem[{Chen et~al.(2025)Chen, Chai, Yang, Zhang, Zhou, Quek, Poria, and
  Liu}]{chen2025diffpodiffusionstyledpreferenceoptimization}
Chen, R.; Chai, W.; Yang, Z.; Zhang, X.; Zhou, J.~T.; Quek, T.; Poria, S.; and
  Liu, Z. 2025.
\newblock DiffPO: Diffusion-styled Preference Optimization for Efficient
  Inference-Time Alignment of Large Language Models.
\newblock arXiv:2503.04240.

\bibitem[{Choi et~al.(2026)Choi, Zhu, Guo, Molodyk, Yuan, Bai, Xin, Tao, and
  Chen}]{choi2026rethinkingdesignspacereinforcement}
Choi, J.; Zhu, Y.; Guo, W.; Molodyk, P.; Yuan, B.; Bai, J.; Xin, Y.; Tao, M.;
  and Chen, Y. 2026.
\newblock Rethinking the Design Space of Reinforcement Learning for Diffusion
  Models: On the Importance of Likelihood Estimation Beyond Loss Design.
\newblock arXiv:2602.04663.

\bibitem[{Du et~al.(2024)Du, Zhang, Liu, Wang, Lin, Li, Niyato, Kang, Xiong,
  Cui, Ai, Zhou, and Kim}]{10529221}
Du, H.; Zhang, R.; Liu, Y.; Wang, J.; Lin, Y.; Li, Z.; Niyato, D.; Kang, J.;
  Xiong, Z.; Cui, S.; Ai, B.; Zhou, H.; and Kim, D.~I. 2024.
\newblock Enhancing Deep Reinforcement Learning: A Tutorial on Generative
  Diffusion Models in Network Optimization.
\newblock \emph{IEEE Communications Surveys \& Tutorials}, 26(4): 2611--2646.

\bibitem[{Fan et~al.(2023)Fan, Watkins, Du, Liu, Ryu, Boutilier, Abbeel,
  Ghavamzadeh, Lee, and Lee}]{NEURIPS2023_fc65fab8}
Fan, Y.; Watkins, O.; Du, Y.; Liu, H.; Ryu, M.; Boutilier, C.; Abbeel, P.;
  Ghavamzadeh, M.; Lee, K.; and Lee, K. 2023.
\newblock DPOK: Reinforcement Learning for Fine-tuning Text-to-Image Diffusion
  Models.
\newblock In Oh, A.; Naumann, T.; Globerson, A.; Saenko, K.; Hardt, M.; and
  Levine, S., eds., \emph{Advances in Neural Information Processing Systems},
  volume~36, 79858--79885. Curran Associates, Inc.

\bibitem[{Gu et~al.(2026)Gu, He, Su, He, Wang, and Liu}]{gu2026scalingcache}
Gu, L.; He, J.; Su, L.; He, K.; Wang, W.; and Liu, Y. 2026.
\newblock ScalingCache: Extreme Acceleration of DiTs through Difference Scaling
  and Dynamic Interval Caching.
\newblock In \emph{The Fourteenth International Conference on Learning
  Representations}.

\bibitem[{Hessel et~al.(2022)Hessel, Holtzman, Forbes, Bras, and
  Choi}]{hessel2022clipscorereferencefreeevaluationmetric}
Hessel, J.; Holtzman, A.; Forbes, M.; Bras, R.~L.; and Choi, Y. 2022.
\newblock CLIPScore: A Reference-free Evaluation Metric for Image Captioning.
\newblock arXiv:2104.08718.

\bibitem[{Ho, Jain, and
  Abbeel(2020)}]{ho2020denoisingdiffusionprobabilisticmodels}
Ho, J.; Jain, A.; and Abbeel, P. 2020.
\newblock Denoising Diffusion Probabilistic Models.
\newblock arXiv:2006.11239.

\bibitem[{Kirstain et~al.(2023)Kirstain, Polyak, Singer, Matiana, Penna, and
  Levy}]{kirstain2023pickapic}
Kirstain, Y.; Polyak, A.; Singer, U.; Matiana, S.; Penna, J.; and Levy, O.
  2023.
\newblock Pick-a-Pic: An Open Dataset of User Preferences for Text-to-Image
  Generation.
\newblock arXiv:2305.01569.

\bibitem[{Labs(2024)}]{flux2024}
Labs, B.~F. 2024.
\newblock FLUX.
\newblock \url{https://github.com/black-forest-labs/flux}.

\bibitem[{Labs et~al.(2025)Labs, Batifol, Blattmann, Boesel, Consul, Diagne,
  Dockhorn, English, English, Esser, Kulal, Lacey, Levi, Li, Lorenz, Müller,
  Podell, Rombach, Saini, Sauer, and Smith}]{labs2025flux1kontextflowmatching}
Labs, B.~F.; Batifol, S.; Blattmann, A.; Boesel, F.; Consul, S.; Diagne, C.;
  Dockhorn, T.; English, J.; English, Z.; Esser, P.; Kulal, S.; Lacey, K.;
  Levi, Y.; Li, C.; Lorenz, D.; Müller, J.; Podell, D.; Rombach, R.; Saini,
  H.; Sauer, A.; and Smith, L. 2025.
\newblock FLUX.1 Kontext: Flow Matching for In-Context Image Generation and
  Editing in Latent Space.
\newblock arXiv:2506.15742.

\bibitem[{Lee and Ye(2026)}]{lee2026pcpoproportionatecreditpolicy}
Lee, J.; and Ye, J.~C. 2026.
\newblock PCPO: Proportionate Credit Policy Optimization for Aligning Image
  Generation Models.
\newblock arXiv:2509.25774.

\bibitem[{Li et~al.(2026)Li, Cui, Huang, Ma, Fan, Cheng, Yang, Zhong, and
  Bo}]{li2026mixgrpounlockingflowbasedgrpo}
Li, J.; Cui, Y.; Huang, T.; Ma, Y.; Fan, C.; Cheng, Y.; Yang, M.; Zhong, Z.;
  and Bo, L. 2026.
\newblock MixGRPO: Unlocking Flow-based GRPO Efficiency with Mixed ODE-SDE.
\newblock arXiv:2507.21802.

\bibitem[{Li et~al.(2025)Li, Wang, Zhu, Zhao, Lu, She, and
  Zhang}]{li2025branchgrpostableefficientgrpo}
Li, Y.; Wang, Y.; Zhu, Y.; Zhao, Z.; Lu, M.; She, Q.; and Zhang, S. 2025.
\newblock BranchGRPO: Stable and Efficient GRPO with Structured Branching in
  Diffusion Models.
\newblock arXiv:2509.06040.

\bibitem[{Liu et~al.(2025{\natexlab{a}})Liu, Liu, Liang, Li, Liu, Wang, Wan,
  Zhang, and Ouyang}]{liu2025flowgrpotrainingflowmatching}
Liu, J.; Liu, G.; Liang, J.; Li, Y.; Liu, J.; Wang, X.; Wan, P.; Zhang, D.; and
  Ouyang, W. 2025{\natexlab{a}}.
\newblock Flow-GRPO: Training Flow Matching Models via Online RL.
\newblock arXiv:2505.05470.

\bibitem[{Liu et~al.(2026)Liu, Ye, Yuan, Zhu, Gao, Wu, Li, Wang, Nie, Huang,
  and Ouyang}]{liu2026unigrpounifiedpolicyoptimization}
Liu, J.; Ye, Z.; Yuan, L.; Zhu, S.; Gao, Y.; Wu, J.; Li, K.; Wang, X.; Nie, X.;
  Huang, W.; and Ouyang, W. 2026.
\newblock UniGRPO: Unified Policy Optimization for Reasoning-Driven Visual
  Generation.
\newblock arXiv:2603.23500.

\bibitem[{Liu et~al.(2025{\natexlab{b}})Liu, Zou, Lyu, Chen, and
  Zhang}]{liu2025reusingforecastingacceleratingdiffusion}
Liu, J.; Zou, C.; Lyu, Y.; Chen, J.; and Zhang, L. 2025{\natexlab{b}}.
\newblock From Reusing to Forecasting: Accelerating Diffusion Models with
  TaylorSeers.
\newblock arXiv:2503.06923.

\bibitem[{Liu, Gong, and Liu(2022)}]{liu2022rectifiedflowstraightfastlearning}
Liu, X.; Gong, C.; and Liu, Q. 2022.
\newblock Flow Straight and Fast: Learning to Generate and Transfer Data with
  Rectified Flow.
\newblock arXiv:2209.03003.

\bibitem[{Luo et~al.(2026)Luo, Hu, Fan, Sun, Chen, Xia, Zhang, Chang, and
  Wang}]{luo2026reinforcementlearningmeetsmasked}
Luo, Y.; Hu, X.; Fan, K.; Sun, H.; Chen, Z.; Xia, B.; Zhang, T.; Chang, Y.; and
  Wang, X. 2026.
\newblock Reinforcement Learning Meets Masked Generative Models: Mask-GRPO for
  Text-to-Image Generation.
\newblock arXiv:2510.13418.

\bibitem[{Ma et~al.(2025)Ma, Zhang, Wang, and
  Ye}]{ma2025consolidatingreinforcementlearningmultimodal}
Ma, T.; Zhang, M.; Wang, Y.; and Ye, Q. 2025.
\newblock Consolidating Reinforcement Learning for Multimodal Discrete
  Diffusion Models.
\newblock arXiv:2510.02880.

\bibitem[{Miao et~al.(2024)Miao, Wang, Wang, Yang, Wang, Qiu, and
  Liu}]{10656815}
Miao, Z.; Wang, J.; Wang, Z.; Yang, Z.; Wang, L.; Qiu, Q.; and Liu, Z. 2024.
\newblock Training Diffusion Models Towards Diverse Image Generation with
  Reinforcement Learning.
\newblock In \emph{2024 IEEE/CVF Conference on Computer Vision and Pattern
  Recognition (CVPR)}, 10844--10853.

\bibitem[{Ping et~al.(2026)Ping, Chen, Yu, Hui, Yan, and
  Chang}]{ping2026longrunleashinglongcontextreasoning}
Ping, B.; Chen, Z.; Yu, Y.; Hui, T.; Yan, J.; and Chang, B. 2026.
\newblock LongR: Unleashing Long-Context Reasoning via Reinforcement Learning
  with Dense Utility Rewards.
\newblock arXiv:2602.05758.

\bibitem[{Rojas et~al.(2026)Rojas, Lin, Rasul, Schneider, Nevmyvaka, Tao, and
  Deng}]{rojas2026improvingreasoningdiffusionlanguage}
Rojas, K.; Lin, J.; Rasul, K.; Schneider, A.; Nevmyvaka, Y.; Tao, M.; and Deng,
  W. 2026.
\newblock Improving Reasoning for Diffusion Language Models via Group Diffusion
  Policy Optimization.
\newblock arXiv:2510.08554.

\bibitem[{Shao et~al.(2024)Shao, Wang, Zhu, Xu, Song, Bi, Zhang, Zhang, Li, Wu,
  and Guo}]{shao2024deepseekmathpushinglimitsmathematical}
Shao, Z.; Wang, P.; Zhu, Q.; Xu, R.; Song, J.; Bi, X.; Zhang, H.; Zhang, M.;
  Li, Y.~K.; Wu, Y.; and Guo, D. 2024.
\newblock DeepSeekMath: Pushing the Limits of Mathematical Reasoning in Open
  Language Models.
\newblock arXiv:2402.03300.

\bibitem[{Song, Meng, and
  Ermon(2022)}]{song2022denoisingdiffusionimplicitmodels}
Song, J.; Meng, C.; and Ermon, S. 2022.
\newblock Denoising Diffusion Implicit Models.
\newblock arXiv:2010.02502.

\bibitem[{Tang et~al.(2026)Tang, Zhang, Wang, Mao, Schmidt, and
  Yeung-Levy}]{tang2026vgrpoonlinereinforcementlearning}
Tang, B.; Zhang, Y.; Wang, X.; Mao, J.; Schmidt, L.; and Yeung-Levy, S. 2026.
\newblock V-GRPO: Online Reinforcement Learning for Denoising Generative Models
  Is Easier than You Think.
\newblock arXiv:2604.23380.

\bibitem[{Wang et~al.(2026{\natexlab{a}})Wang, Deng, Pan, Liu, Wang, Zhang, Qi,
  and Wang}]{wang2026udmgrpostableefficientgroup}
Wang, J.; Deng, H.; Pan, T.; Liu, Y.; Wang, C.; Zhang, F.; Qi, Y.; and Wang, X.
  2026{\natexlab{a}}.
\newblock UDM-GRPO: Stable and Efficient Group Relative Policy Optimization for
  Uniform Discrete Diffusion Models.
\newblock arXiv:2604.18518.

\bibitem[{Wang et~al.(2025)Wang, Liang, Liu, Liu, Liu, Zheng, Pang, Ma, Xie,
  Wang, Wang, Wan, and
  Liang}]{wang2025grpoguardmitigatingimplicitoveroptimization}
Wang, J.; Liang, J.; Liu, J.; Liu, H.; Liu, G.; Zheng, J.; Pang, W.; Ma, A.;
  Xie, Z.; Wang, X.; Wang, M.; Wan, P.; and Liang, X. 2025.
\newblock GRPO-Guard: Mitigating Implicit Over-Optimization in Flow Matching
  via Regulated Clipping.
\newblock arXiv:2510.22319.

\bibitem[{Wang et~al.(2026{\natexlab{b}})Wang, Huang, Yu, Yang, Lin, Wu, Xiao,
  Chen, Wang, Zhu, Zhang, and Qin}]{wang2026prismprealignmentblackboxonpolicy}
Wang, S.; Huang, W.; Yu, X.; Yang, Z.; Lin, H.; Wu, K.; Xiao, C.; Chen, C.;
  Wang, W.; Zhu, B.; Zhang, Y.; and Qin, C. 2026{\natexlab{b}}.
\newblock Beyond SFT-to-RL: Pre-alignment via Black-Box On-Policy Distillation
  for Multimodal RL.
\newblock arXiv:2604.28123.

\bibitem[{Wu et~al.(2025)Wu, Li, Zhou, Lin, Gao, Yan, ming Yin, Bai, Xu, Chen,
  Chen, Tang, Zhang, Wang, Yang, Yu, Cheng, Liu, Li, Zhang, Meng, Wei, Ni,
  Chen, Cao, Peng, Qu, Wu, Wang, Yu, Wen, Feng, Xu, Wang, Zhang, Zhu, Wu, Cai,
  and Liu}]{wu2025qwenimagetechnicalreport}
Wu, C.; Li, J.; Zhou, J.; Lin, J.; Gao, K.; Yan, K.; ming Yin, S.; Bai, S.; Xu,
  X.; Chen, Y.; Chen, Y.; Tang, Z.; Zhang, Z.; Wang, Z.; Yang, A.; Yu, B.;
  Cheng, C.; Liu, D.; Li, D.; Zhang, H.; Meng, H.; Wei, H.; Ni, J.; Chen, K.;
  Cao, K.; Peng, L.; Qu, L.; Wu, M.; Wang, P.; Yu, S.; Wen, T.; Feng, W.; Xu,
  X.; Wang, Y.; Zhang, Y.; Zhu, Y.; Wu, Y.; Cai, Y.; and Liu, Z. 2025.
\newblock Qwen-Image Technical Report.
\newblock arXiv:2508.02324.

\bibitem[{Wu et~al.(2023)Wu, Hao, Sun, Chen, Zhu, Zhao, and
  Li}]{wu2023humanpreferencescorev2}
Wu, X.; Hao, Y.; Sun, K.; Chen, Y.; Zhu, F.; Zhao, R.; and Li, H. 2023.
\newblock Human Preference Score v2: A Solid Benchmark for Evaluating Human
  Preferences of Text-to-Image Synthesis.
\newblock arXiv:2306.09341.

\bibitem[{Xiao et~al.(2024)Xiao, Jiang, Zhou, Zhang, Huang, and
  Yang}]{10987802}
Xiao, Z.; Jiang, P.; Zhou, M.; Zhang, J.; Huang, Z.; and Yang, J. 2024.
\newblock DiffPPO: Reinforcement Learning Fine-Tuning of Diffusion Models for
  Text-to-Image Generation.
\newblock In \emph{2024 International Conference on Neuromorphic Computing
  (ICNC)}, 1--5.

\bibitem[{Xue et~al.(2025)Xue, Wu, Gao, Kong, Zhu, Chen, Liu, Liu, Guo, Huang,
  and Luo}]{xue2025dancegrpounleashinggrpovisual}
Xue, Z.; Wu, J.; Gao, Y.; Kong, F.; Zhu, L.; Chen, M.; Liu, Z.; Liu, W.; Guo,
  Q.; Huang, W.; and Luo, P. 2025.
\newblock DanceGRPO: Unleashing GRPO on Visual Generation.
\newblock arXiv:2505.07818.

\bibitem[{Yuan et~al.(2025)Yuan, Liu, Yue, Zhang, Zuo, Wang, Zhang, and
  Zhou}]{yuan2025argrpotrainingautoregressiveimage}
Yuan, S.; Liu, Y.; Yue, Y.; Zhang, J.; Zuo, W.; Wang, Q.; Zhang, F.; and Zhou,
  G. 2025.
\newblock AR-GRPO: Training Autoregressive Image Generation Models via
  Reinforcement Learning.
\newblock arXiv:2508.06924.

\bibitem[{Zhai et~al.(2026)Zhai, Chen, Zhao, Qian, Shen, and
  Lu}]{zhai2026rewardslabelsrevisitingrlvr}
Zhai, Z.; Chen, M.; Zhao, J.; Qian, J.; Shen, L.; and Lu, Y. 2026.
\newblock Rewards as Labels: Revisiting RLVR from a Classification Perspective.
\newblock arXiv:2602.05630.

\bibitem[{Zhang et~al.(2025)Zhang, Zuo, He, Sun, Liu, Jiang, Fan, Tian, Jia,
  Li, Fu, Lv, Zhang, Zeng, Qu, Li, Wang, Wang, Long, Liu, Xu, Ma, Zhu, Hua,
  Liu, Li, Chen, Qu, Li, Chen, Yuan, Gao, Li, Ma, Cui, Liu, Qi, Ding, and
  Zhou}]{zhang2025surveyreinforcementlearninglarge}
Zhang, K.; Zuo, Y.; He, B.; Sun, Y.; Liu, R.; Jiang, C.; Fan, Y.; Tian, K.;
  Jia, G.; Li, P.; Fu, Y.; Lv, X.; Zhang, Y.; Zeng, S.; Qu, S.; Li, H.; Wang,
  S.; Wang, Y.; Long, X.; Liu, F.; Xu, X.; Ma, J.; Zhu, X.; Hua, E.; Liu, Y.;
  Li, Z.; Chen, H.; Qu, X.; Li, Y.; Chen, W.; Yuan, Z.; Gao, J.; Li, D.; Ma,
  Z.; Cui, G.; Liu, Z.; Qi, B.; Ding, N.; and Zhou, B. 2025.
\newblock A Survey of Reinforcement Learning for Large Reasoning Models.
\newblock arXiv:2509.08827.

\bibitem[{Zhao et~al.(2025{\natexlab{a}})Zhao, Liu, Huang, Liu, Wang, Liu,
  Tian, Pang, Bell, Grover, and
  Chen}]{zhao2025inpaintingguidedpolicyoptimizationdiffusion}
Zhao, S.; Liu, M.; Huang, J.; Liu, M.; Wang, C.; Liu, B.; Tian, Y.; Pang, G.;
  Bell, S.; Grover, A.; and Chen, F. 2025{\natexlab{a}}.
\newblock Inpainting-Guided Policy Optimization for Diffusion Large Language
  Models.
\newblock arXiv:2509.10396.

\bibitem[{Zhao et~al.(2025{\natexlab{b}})Zhao, Jin, Wang, and
  You}]{zhao2025realtimevideogenerationpyramid}
Zhao, X.; Jin, X.; Wang, K.; and You, Y. 2025{\natexlab{b}}.
\newblock Real-Time Video Generation with Pyramid Attention Broadcast.
\newblock arXiv:2408.12588.

\bibitem[{Zheng et~al.(2026{\natexlab{a}})Zheng, Chen, Ye, Wang, Zhang, Jiang,
  Su, Ermon, Zhu, and Liu}]{zheng2026diffusionnftonlinediffusionreinforcement}
Zheng, K.; Chen, H.; Ye, H.; Wang, H.; Zhang, Q.; Jiang, K.; Su, H.; Ermon, S.;
  Zhu, J.; and Liu, M.-Y. 2026{\natexlab{a}}.
\newblock DiffusionNFT: Online Diffusion Reinforcement with Forward Process.
\newblock arXiv:2509.16117.

\bibitem[{Zheng et~al.(2026{\natexlab{b}})Zheng, Kong, Wu, Jiang, Ma, He, Lin,
  Gong, Zhong, Bo, Chen, and
  Yang}]{zheng2026manifoldawareexplorationreinforcementlearning}
Zheng, M.; Kong, W.; Wu, Y.; Jiang, D.; Ma, Y.; He, X.; Lin, B.; Gong, K.;
  Zhong, Z.; Bo, L.; Chen, Q.; and Yang, H. 2026{\natexlab{b}}.
\newblock Manifold-Aware Exploration for Reinforcement Learning in Video
  Generation.
\newblock arXiv:2603.21872.

\bibitem[{Zhong et~al.(2026)Zhong, Wang, Ding, Feng, Bai, Xiang, Sun, and
  Xu}]{zhong2026stabilizingreinforcementlearningdiffusion}
Zhong, J.; Wang, K.; Ding, D.; Feng, Z.; Bai, H.; Xiang, Y.; Sun, J.; and Xu,
  Q. 2026.
\newblock Stabilizing Reinforcement Learning for Diffusion Language Models.
\newblock arXiv:2603.06743.

\bibitem[{Zhu et~al.(2024)Zhu, Zhao, He, Zhong, Zhang, Guo, Chen, and
  Zhang}]{zhu2024diffusionmodelsreinforcementlearning}
Zhu, Z.; Zhao, H.; He, H.; Zhong, Y.; Zhang, S.; Guo, H.; Chen, T.; and Zhang,
  W. 2024.
\newblock Diffusion Models for Reinforcement Learning: A Survey.
\newblock arXiv:2311.01223.

\end{thebibliography}
\newpage
\appendix

\section{Experiment Figures}
\label{sec:appendix-exp-figs}

This appendix collects the training/evaluation reward curves referenced from the Experiments section, so that the main text can focus on the quantitative results and analysis.

\begin{figure}[h]
    \centering
    \includegraphics[width=0.75\linewidth]{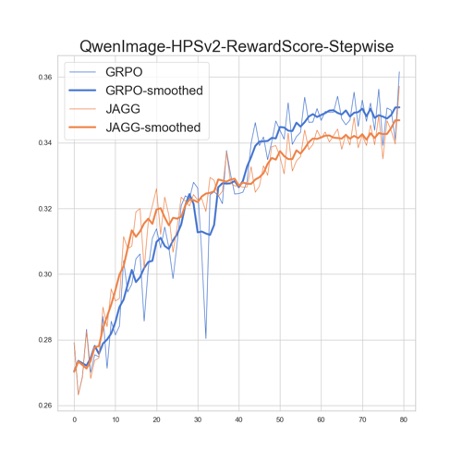}
    \caption{QwenImage HPSv2.1-only smoke test. HPSv2.1 reward score vs.\ training steps for Baseline (blue) and JAGG (orange).}
    \label{fig:qwen_hps_smoke}
\end{figure}

\begin{figure}[h]
    \centering
    \includegraphics[width=0.75\linewidth]{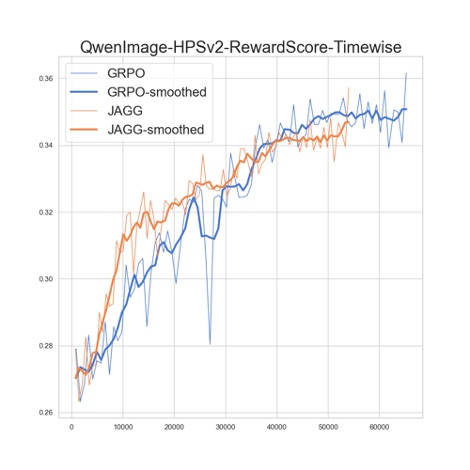}
    \caption{QwenImage training with mixed HPSv2.1+CLIPScore rewards. Average reward score vs.\ training steps for Baseline (blue), JAGG (orange), and the ablation variant when available.}
    \label{fig:qwen_mixed_stepwise}
\end{figure}

\begin{table*}[h]
\centering
\small
\caption{QwenImage trained on \textbf{HPSv2.1 + CLIPScore (1:1)}. All three metrics reported at each checkpoint. Bold indicates best per row.}
\label{tab:qwen_mixed}
\setlength{\tabcolsep}{4pt}
\begin{tabular}{l ccc ccc ccc}
\hline
 & \multicolumn{3}{c}{\textbf{Baseline}} & \multicolumn{3}{c}{\textbf{JAGG (ours)}} & \multicolumn{3}{c}{\textbf{Ablate}} \\
\cmidrule(lr){2-4}\cmidrule(lr){5-7}\cmidrule(lr){8-10}
\textbf{Step} & HPS & Pick & Clip & HPS & Pick & Clip & HPS & Pick & Clip \\
\hline
0  & .2665 & .2224 & .2694 & .2665 & .2224 & .2694 & .2665 & .2224 & .2694 \\
10 & .3053 & .2316 & .2823 & .3038 & .2313 & .2809 & .3068 & .2314 & .2814 \\
20 & .3144 & .2321 & .2851 & \textbf{.3229} & \textbf{.2351} & .2783 & .3181 & .2332 & .2803 \\
30 & .3208 & .2322 & \textbf{.2863} & \textbf{.3284} & \textbf{.2353} & .2807 & .3154 & .2319 & .2721 \\
40 & .3298 & .2342 & \textbf{.2817} & \textbf{.3335} & \textbf{.2360} & .2802 & .3223 & .2329 & .2772 \\
50 & \textbf{.3353} & \textbf{.2350} & \textbf{.2790} & .3322 & .2335 & .2810 & .3295 & .2326 & .2812 \\
60 & .3281 & .2321 & .2744 & \textbf{.3379} & .2333 & .2754 & .3318 & .2318 & .2763 \\
70 & .3392 & .2329 & \textbf{.2742} & \textbf{.3500} & \textbf{.2347} & .2660 & .3396 & .2292 & .2621 \\
80 & .3241 & .2272 & .2619 & .3448 & .2260 & \textbf{.2729} & \textbf{.3452} & .2308 & .2598 \\
\hline
\end{tabular}
\end{table*}

\begin{figure}[h]
    \centering
    \includegraphics[width=0.75\linewidth]{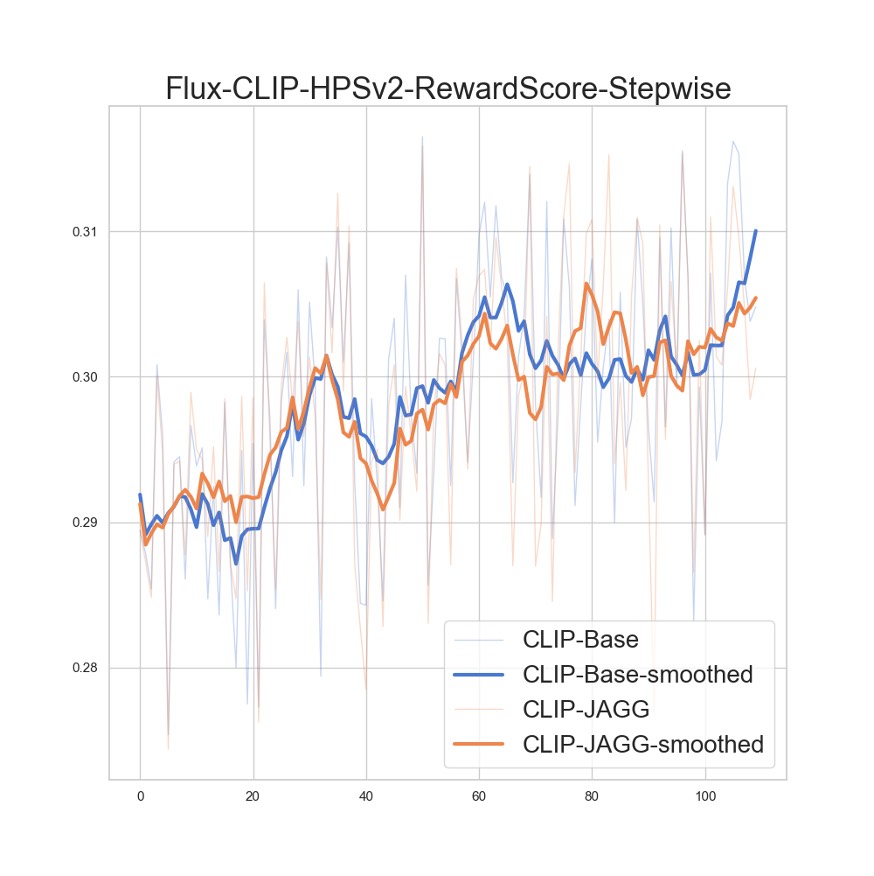}
    \caption{Flux DanceGRPO training with mixed HPSv2.1+CLIPScore rewards. Reward curve for Baseline, JAGG, and the ablation setting where available.}
    \label{fig:flux_mixed_dancegrpo}
\end{figure}

\begin{figure}[h]
    \centering
    \includegraphics[width=0.48\linewidth]{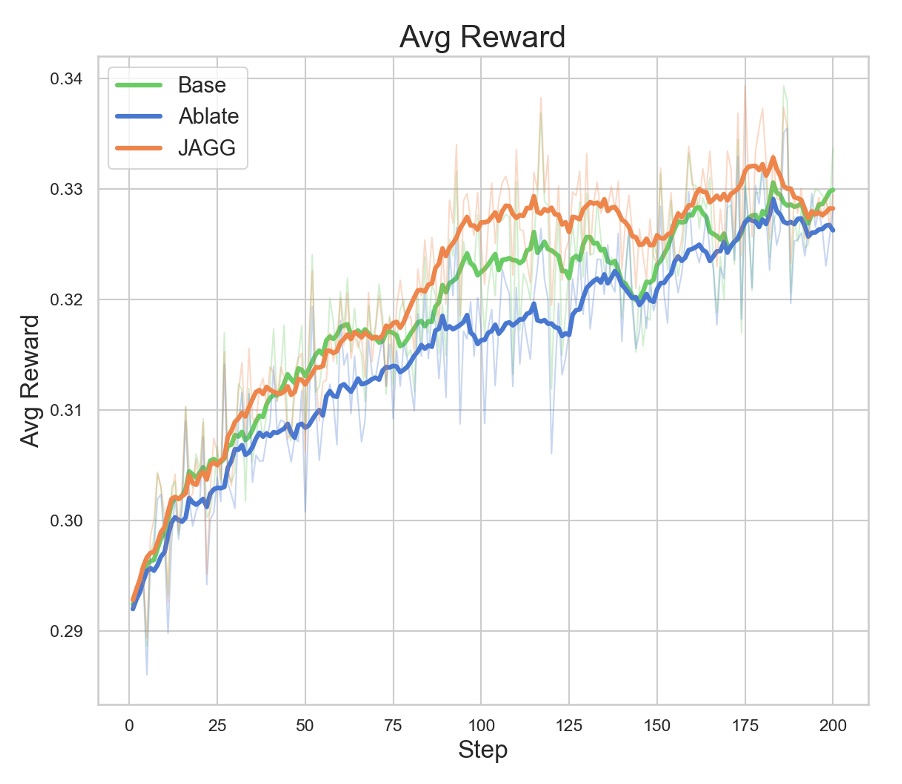}\hfill\includegraphics[width=0.48\linewidth]{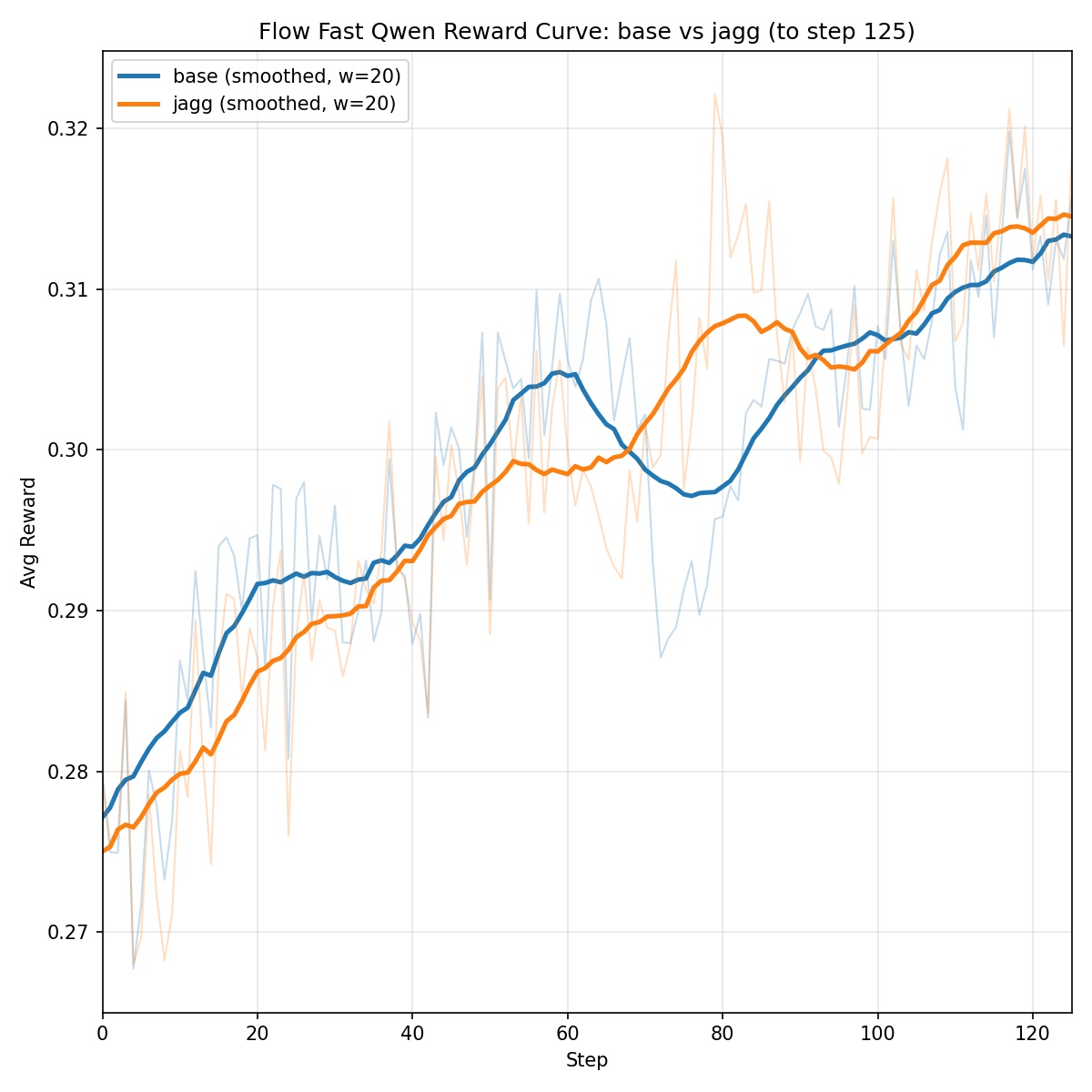}
    \caption{\textbf{Left:} Flux1.dev and \textbf{Right:} QwenImage reward curve trained using Flow-GRPO-Fast algorithm under different settings (w JAGG, w/o JAGG, Ablation).}
    \label{fig:flow_fast_run_stepwise}
\end{figure}

\FloatBarrier

\section{Why Backward-Pass Reduction Yields Outsized Speedup}
\label{app:why-bwd-speedup}

This appendix expands on the two mechanisms briefly referenced in the main-text Efficiency Analysis: (i) the transformer backward pass is intrinsically more expensive than the forward pass in FLOPs, and (ii) each backward pass triggers a collective communication under data parallelism. Together they explain why cutting the number of backward passes from $W$ to $2$ per group yields a wall-clock speedup that is substantially larger than what raw FLOP counts alone would suggest.

\noindent\textbf{Backward-pass compute is inherently heavier than forward.}
Consider the scaled dot-product attention sublayer. The forward pass computes:
\begin{equation}
    S = QK^\top/\sqrt{d_k}, \quad P = \text{softmax}(S), \quad O = PV,
\end{equation}
where $Q,K,V\in\mathbb{R}^{N\times d_k}$. The dominant cost is the two matrix products $QK^\top$ and $PV$, each $O(N^2 d_k)$ FLOPs, giving a total forward attention cost of $2N^2 d_k$ FLOPs (ignoring the cheap softmax). In the backward pass, given upstream gradient $\partial L/\partial O$, one must compute:
\begin{align}
    \frac{\partial L}{\partial V} &= P^\top \frac{\partial L}{\partial O}, \label{eq:bwd_V}\\
    \frac{\partial L}{\partial P} &= \frac{\partial L}{\partial O} V^\top, \label{eq:bwd_P}\\
    \frac{\partial L}{\partial S} &= \text{softmax\_bwd}\!\left(P,\, \frac{\partial L}{\partial P}\right), \label{eq:bwd_S}\\
    \frac{\partial L}{\partial Q} &= \frac{\partial L}{\partial S} K/\sqrt{d_k}, \quad
    \frac{\partial L}{\partial K} = \left(\frac{\partial L}{\partial S}\right)^\top Q/\sqrt{d_k}. \label{eq:bwd_QK}
\end{align}
Equations~\eqref{eq:bwd_V}--\eqref{eq:bwd_QK} involve \emph{four} $O(N^2 d_k)$ matrix products (for $\partial L/\partial V$, $\partial L/\partial P$, $\partial L/\partial Q$, $\partial L/\partial K$) plus the $O(N^2)$ softmax backward---totaling $4N^2 d_k$ dominant FLOPs, exactly $\mathbf{2\times}$ the forward attention cost. Accounting for the weight-gradient terms in the linear projections, the full transformer backward is approximately $2\times$ the forward in total FLOPs. Reducing the number of backward passes therefore yields disproportionately larger compute savings than an equal reduction in forward passes.

\noindent\textbf{All-reduce communication is coupled to every backward pass under data parallelism.}
In distributed training with data parallelism (DP) or FSDP, each backward pass triggers an all-reduce (or reduce-scatter) over the full parameter gradient tensor. This communication is unavoidable regardless of batch size: even a single-sample backward must synchronize gradients across all DP ranks before the optimizer step. In large-scale setups the all-reduce overhead is non-trivial---it grows linearly with parameter count and can constitute a significant fraction of step time at scale. Because JAGG reduces the number of backward passes from $W$ to $2$ per group, it also reduces the number of all-reduce rounds by the same factor, directly cutting communication overhead.

Together, these two effects account for the measured end-to-end speedups reported in the Efficiency Analysis sections (DanceGRPO and Flow-GRPO-Fast, Figure~\ref{fig:efficiency_flowfast}).

\section{JAGG Implementation Pseudocode}\begin{algorithm}[H]
\caption{JAGG: Joint-Graph Variant (memory-standard)}
\label{alg:jagg-joint}
\begin{algorithmic}[1]
\REQUIRE Group latents $\{z_j\}_{j=0}^{W-1}$, timesteps $\{t_j\}$, advantage $A$, old log-probs $\{\ell_j^{\text{old}}\}$
\STATE // Phase 1: forward passes
\FOR{$j = 0$ \TO $W-1$}
  \IF{$j \in \{0, W-1\}$}
    \STATE $\hat{v}_j \leftarrow \mathrm{DiT}(z_j, t_j)$ \hfill // with grad
  \ELSE
    \STATE $\hat{v}_j \leftarrow \mathrm{DiT}(z_j, t_j)$ \hfill // no\_grad
  \ENDIF
\ENDFOR
\STATE // Phase 2: upstream gradients via detached proxy
\STATE $s_{\text{base}} \leftarrow 0$; $\; s_{\text{corr}} \leftarrow 0$
\FOR{$j = 0$ \TO $W-1$}
  \STATE $\tilde{v}_j \leftarrow \hat{v}_j.\mathrm{detach}().\mathrm{requires\_grad\_}()$
  \STATE Compute $L_j$ (PPO loss) using $\tilde{v}_j$, $z_j$, $\ell_j^{\text{old}}$, $A$
  \STATE $s_j \leftarrow \partial L_j / \partial \tilde{v}_j$
  \STATE $\alpha_j \leftarrow (t_0 - t_j)/(t_0 - t_{W-1})$
  \STATE $s_{\text{base}} \mathrel{+}= (1-\alpha_j)\,s_j$; $\; s_{\text{corr}} \mathrel{+}= \alpha_j\,s_j$
\ENDFOR
\STATE // Phase 3: joint backward (single FSDP all-gather)
\STATE $\texttt{torch.autograd.backward}([\hat{v}_0,\, \hat{v}_{W-1}],\,[s_{\text{base}},\, s_{\text{corr}}])$
\end{algorithmic}
\end{algorithm}

\begin{algorithm}[H]
\caption{JAGG: Sequential-Graph Variant (memory-efficient, for large models)}
\label{alg:jagg-seq}
\begin{algorithmic}[1]
\REQUIRE Same as Algorithm~\ref{alg:jagg-joint}
\STATE // Phase 1: no-grad forward for middle steps; accumulate partial signals
\STATE $s_{\text{base}} \leftarrow 0$; $\; s_{\text{corr}} \leftarrow 0$
\FOR{$j = 1$ \TO $W-2$}
  \STATE $\hat{v}_j \leftarrow \mathrm{DiT}(z_j, t_j)$ \hfill // no\_grad
  \STATE $\tilde{v}_j \leftarrow \hat{v}_j.\mathrm{detach}().\mathrm{requires\_grad\_}()$; compute $s_j$
  \STATE $\alpha_j \leftarrow (t_0 - t_j)/(t_0 - t_{W-1})$
  \STATE $s_{\text{base}} \mathrel{+}= (1-\alpha_j)\,s_j$; $\; s_{\text{corr}} \mathrel{+}= \alpha_j\,s_j$
\ENDFOR
\STATE // Phase 2a: endpoint $j{=}0$ ($\alpha_0{=}0$, contributes entirely to $s_{\text{base}}$)
\STATE $\hat{v}_0 \leftarrow \mathrm{DiT}(z_0, t_0)$ \hfill // with grad; graph built
\STATE $\tilde{v}_0 \leftarrow \hat{v}_0.\mathrm{detach}().\mathrm{requires\_grad\_}()$; compute $s_0$
\STATE $s_{\text{base}} \mathrel{+}= s_0$
\STATE $\hat{v}_0.\mathrm{backward}(s_{\text{base}})$ \hfill // frees graph for step 0
\STATE // Phase 2b: endpoint $j{=}W{-}1$ ($\alpha_{W-1}{=}1$, contributes entirely to $s_{\text{corr}}$)
\STATE $\hat{v}_{W-1} \leftarrow \mathrm{DiT}(z_{W-1}, t_{W-1})$ \hfill // with grad; graph built
\STATE $\tilde{v}_{W-1} \leftarrow \hat{v}_{W-1}.\mathrm{detach}().\mathrm{requires\_grad\_}()$; compute $s_{W-1}$
\STATE $s_{\text{corr}} \mathrel{+}= s_{W-1}$
\STATE $\hat{v}_{W-1}.\mathrm{backward}(s_{\text{corr}})$ \hfill // frees graph for step $W{-}1$
\end{algorithmic}
\end{algorithm}

\section{Experimental Setup and Hyperparameters}
\label{sec:appendix-hparams}

This section reports the full hardware and hyperparameter configuration used in our experiments. All training scripts share the same family of arguments built on top of \texttt{torchrun} with FSDP-wrapped DiT backbones; the only systematic differences across runs lie in (i) the underlying base model, (ii) the JAGG-specific arguments (\texttt{sum\_sample}, \texttt{jagg\_frac}), and (iii) the resolution / temporal extent imposed by the generation modality. Inference (rollout) and training share the same code path during on-policy GRPO; we therefore describe their hyperparameters jointly.

\noindent\textbf{Hardware.}
All experiments are conducted on GPUs each with $270$\,GB of memory and high-speed interconnect. Image-generation experiments (Flux, QwenImage) use a single node of $8$ GPUs. Inference (the on-policy rollout that produces the candidate group of $G=12$ samples per prompt) is uniformly carried out on $8$ GPUs with FSDP sharding of the DiT backbone, ensuring a comparable throughput baseline across all reported wall-clock numbers.

\noindent\textbf{Common GRPO and optimization settings.}
We use AdamW with learning rate $1\times 10^{-5}$, weight decay $1\times 10^{-4}$, no warmup, and bf16 mixed precision throughout. Each step uses $G=12$ generations per prompt with intra-group advantage normalization (\texttt{use\_group}). The PPO-style surrogate uses a clip range of $0.1$ and an advantage clamp of $5.0$, with gradient clipping at $1\times 10^{-3}$ to stabilize training under the larger effective batch size implied by \texttt{gradient\_accumulation\_steps}. All runs use seed 42, sampler seed $12627$, shared initial noise (\texttt{init\_same\_noise}/\texttt{use\_same\_noise}) within each group, and CFG disabled (\texttt{cfg=0.0}).

\noindent\textbf{Sampling and rollout.}
We use a unified denoising schedule of \texttt{sampling\_steps}$=21$ across all models. The choice of $21$ rather than $20$ ensures that $(\texttt{sampling\_steps}-1)=20$ is divisible by the JAGG group size $W \in \{4, 5\}$, which is required for clean grouping. The flow-matching shift parameter is $3$ for image models, with stochastic-DDIM noise scale $\eta=0.3$ (Flux JAGG additionally uses $\eta=0.9$ for higher exploration). The image runs render at $720\times 720$ ($t=1$).

\noindent\textbf{Per-experiment configuration.}
Table~\ref{tab:hparams} reports the per-model settings for QwenImage and Flux. Within each model the standard baseline and the JAGG variant share the same hyperparameters; the only differences are the JAGG-specific arguments (\texttt{sum\_sample}, \texttt{jagg\_frac}, JAGG variant), which are inactive in the baseline and listed at the bottom of the table. Per-GPU batch size is $8$ throughout; the effective on-policy batch size per optimizer step is \texttt{train\_batch\_size}$\times$\texttt{nproc}$\times$\texttt{gradient\_accumulation\_steps} divided by \texttt{sp\_size}.

\begin{table*}[h]
\centering
\small
\caption{Per-model hyperparameters. Within each model column, the standard baseline and the JAGG variant use identical settings except for the JAGG-specific rows at the bottom (inactive for the baseline). All runs use $G=12$ generations per prompt, lr $1\times 10^{-5}$, weight decay $1\times 10^{-4}$, AdamW, and bf16. Training: $8$ GPUs. Inference: uniformly $8$ GPUs with FSDP.}
\label{tab:hparams}
\begin{tabular}{lcc}
\toprule
Hyperparameter & QwenImage & Flux \\
\midrule
Modality / resolution & T2I, $720^2$ & T2I, $720^2$ \\
Training GPUs & $8$ & $8$ \\
\texttt{train\_batch\_size} (per GPU) & $8$ & $8$ \\
\texttt{gradient\_accumulation\_steps} & $4$ & $12$ \\
\texttt{sampling\_steps} & $21$ & $21$ \\
\texttt{shift} / $\eta$ & $3$ / $0.3$ & $3$ / $0.3$ \\
\texttt{timestep\_fraction} & $0.6$ & $1.0$ \\
\texttt{max\_grad\_norm} & $1.0$ & $1\times 10^{-3}$ \\
\texttt{clip\_range} / \texttt{adv\_clip\_max} & $1\times 10^{-1}$ / $5.0$ & $1\times 10^{-1}$ / $5.0$ \\
\texttt{ignore\_last} & yes & yes \\
\texttt{sp\_size} & $1$ & $1$ \\
\texttt{selective\_checkpointing} & $0.0$ & $0.0$ \\
\texttt{use\_cpu\_offload} & no & no \\
\midrule
\multicolumn{3}{l}{\emph{JAGG-only (inactive in the standard baseline)}} \\
\texttt{sum\_sample} ($W$) & $4$ & $4$ \\
\texttt{jagg\_frac} & $0.6$ & $0.6$ \\
JAGG variant & joint-graph & joint-graph \\
\midrule
\multicolumn{3}{l}{\emph{Flow-GRPO-Fast-specific settings}} \\
\texttt{sde\_window} (step range) & $4$--$12$ & $0$--$12$ \\
KL coefficient & $0.001$ & $0.1$ \\
\texttt{gradient\_accumulation\_steps} & $96$ & $12$ \\
\texttt{jagg\_frac} (for stability) & $0.4$ & $0.4$ \\
\bottomrule
\end{tabular}
\end{table*}

\noindent\textbf{Reward configuration.}
T2I runs (Flux, QwenImage) optimize a weighted combination of HPSv2~\cite{wu2023humanpreferencescorev2} (\texttt{use\_hpsv2}) and CLIPScore~\cite{hessel2022clipscorereferencefreeevaluationmetric} (\texttt{use\_clipscore}, with \texttt{clip-vit-large-patch14}).

\noindent\textbf{Memory and parallelism knobs.}
All runs enable \texttt{gradient\_checkpointing} and FSDP. Image runs use $\texttt{sp\_size}=1$ (no sequence-parallel) and \texttt{selective\_checkpointing}$=0.0$. Image runs use the \emph{joint-graph} variant of JAGG (Algorithm~\ref{alg:jagg-joint}) for fewer FSDP all-gathers.

\noindent\textbf{Timestep filtering.}
The argument \texttt{timestep\_fraction} controls the fraction of the trajectory eligible for gradient updates. We use $1.0$ (full trajectory) for Flux and $0.6$ for QwenImage, which we found to slightly stabilize early training under the larger DiT. The flag \texttt{ignore\_last} drops the terminal (clean-sample) timestep from the loss for image runs, consistent with prior diffusion-RL practice. The JAGG-specific routing parameter \texttt{jagg\_frac} sets the fraction of \emph{groups} (ordered from high-noise to low-noise) that use 2-pass JAGG backward; the remaining late, near-clean groups use exact $W$-pass backward, as motivated in the ``\nameref{sec:empirical-quality}'' section.

\section{Empirical Gradient Quality and Choice of \texttt{jagg\_frac}}
\label{sec:empirical-quality}

\begin{table}[htbp]
\centering
\small
\caption{Per-group gradient approximation quality of JAGG ($20$ denoising steps, $W{=}4$, $5$ groups; group~0 = highest noise, group~4 = lowest noise). Bold entries (satisfying $\cos\geq 0.5$) are routed to JAGG; the rest use exact per-step backward. $\rho$ is reported as an additional quality indicator; its dramatic inflation in late groups corroborates the routing decision. This yields \texttt{jagg\_frac}$=0.6$ for T2I.}
\label{tab:jagg_quality}
\begin{tabular}{clcc}
\hline
\textbf{Model} & \textbf{Group} & $\cos(\hat{g}_\text{JAGG}, g_\text{exact})$ & $\rho$ \\
\hline
\multirow{5}{*}{Flux}
 & 0 (high noise) & $\mathbf{0.85 \pm 0.06}$ & $\mathbf{0.96 \pm 0.08}$ \\
 & 1              & $\mathbf{0.77 \pm 0.18}$ & $\mathbf{1.03 \pm 0.20}$ \\
 & 2              & $\mathbf{0.74 \pm 0.04}$ & $\mathbf{1.08 \pm 0.09}$ \\
 & 3              & $\mathbf{0.63 \pm 0.13}$ & $1.22 \pm 0.21$ \\
 & 4 (low noise)  & $0.41 \pm 0.14$ & $1.56 \pm 0.24$ \\
\hline
\multirow{5}{*}{QwenImage}
 & 0 (high noise) & $\mathbf{0.79 \pm 0.09}$ & $\mathbf{1.15 \pm 0.14}$ \\
 & 1              & $\mathbf{0.64 \pm 0.19}$ & $\mathbf{1.05 \pm 0.25}$ \\
 & 2              & $\mathbf{0.58 \pm 0.26}$ & $\mathbf{1.19 \pm 0.31}$ \\
 & 3              & $0.45 \pm 0.12$ & $1.79 \pm 0.30$ \\
 & 4 (low noise)  & $0.22 \pm 0.28$ & $4.35 \pm 0.95$ \\
\hline
\end{tabular}
\end{table}
Theorem~\ref{thm:linear_interp} guarantees exact gradient recovery under the linearity assumption. To verify that this transfers to real diffusion backbones---and in particular to confirm that \emph{JAGG provides accurate gradient estimates in the early, high-noise denoising steps}---we directly measure the approximation quality on rollouts from Flux and QwenImage. For each rollout we use $20$ denoising steps split into $5$ groups of $W{=}4$. For every group we compute (i) the JAGG-aggregated gradient $\hat{g}_\text{JAGG}$ via the two-endpoint backward and (ii) the exact reference gradient $g_\text{exact}$ from full per-step backward through all $W$ velocity predictions. We report the cosine similarity $\cos(\hat{g}_\text{JAGG}, g_\text{exact})$ and the norm ratio $\rho = \|\hat{g}_\text{JAGG}\|/\|g_\text{exact}\|$, averaged over $\geq 20$ rollouts per group.

As shown in Table~\ref{tab:jagg_quality}, in the high-noise groups (0--1) JAGG produces gradients very close to the exact reference on both backbones---cosine similarities are $0.85$/$0.77$ (Flux) and $0.79$/$0.64$ (QwenImage), with norm ratio $\rho$ within $\pm 50\%$ of $1$. This confirms the central premise of JAGG: in the early denoising steps the velocity field is sufficiently close to linear that two-endpoint Jacobian aggregation recovers the true descent direction faithfully. The approximation degrades monotonically as noise decreases, with cosine similarity dropping below $0.7$ at group~3 for both T2I models, while $\rho$ inflates dramatically in later groups---reaching $4.35\times$ at group~4 for QwenImage---precisely the signal that motivates the joint routing rule to exclude these groups.

This measurement directly determines \texttt{jagg\_frac}: we route a group to JAGG when its measured $\cos(\hat{g}_\text{JAGG}, g_\text{exact}) \geq 0.5$. A cosine similarity above $0.5$ ensures that the approximated gradient has a positive projection onto the true descent direction, and optimization theory~\cite{bottou2018optimizationmethodslargescalemachine} establishes that such a sufficient-descent-direction condition is sufficient for convergence of gradient-based methods; our training uses Adam/AdamW which, as a preconditioned SGD-with-momentum optimizer, inherits the same robustness to biased gradient directions. Applying this rule: for Flux, groups 0--3 all exceed $0.5$ in mean cosine, though group~3 carries high variance ($\pm 0.13$); we conservatively set \texttt{jagg\_frac}$=0.6$ (groups 0--2) to avoid the borderline group~3. For QwenImage, groups 0--2 qualify, also yielding \texttt{jagg\_frac}$=0.6$. Crucially, the $\rho$ column provides a corroborating signal: its dramatic inflation to $4.35\times$ in the late QwenImage groups confirms that the approximation breaks down there in both direction and scale, validating the routing decision. This data-driven rule exactly reproduces the values used in our main experiments and provides a model-agnostic recipe for any new backbone.

\FloatBarrier
\section{Qualitative Samples}
\label{sec:appendix-samples}

This section provides side-by-side qualitative comparisons between the standard GRPO baseline and our JAGG variant on the HPSv2.1+CLIPScore reward. We render each method's output for the same prompt (i.e., the same trajectory ID and shared initial noise) at a representative training checkpoint. For QwenImage we use \texttt{checkpoint-71}; for Flux we use \texttt{checkpoint-81}. The QwenImage block additionally shows a JAGG ablation (\texttt{JAGG (ablate)}) that drops the $t$-weighted middle-step contribution, which underperforms full JAGG and motivates the use of the full Jacobian aggregation. The 20 prompts per model were sampled to cover a diverse set of styles (anime, fantasy, 3D / Pixar, surreal painting, photorealism, animal portrait, character action, complex scene composition).

\subsection{QwenImage (Step 71)}

\qwenSampleRow{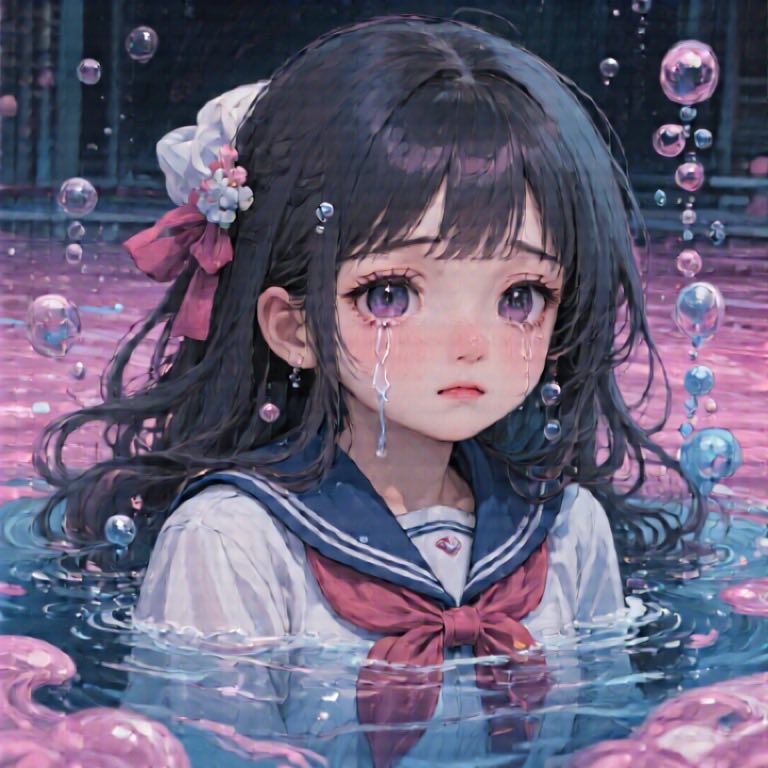}{A cute anime schoolgirl with a sad face submerged in dark pink and blue water, portrayed in an oil painting style.}
\qwenSampleRow{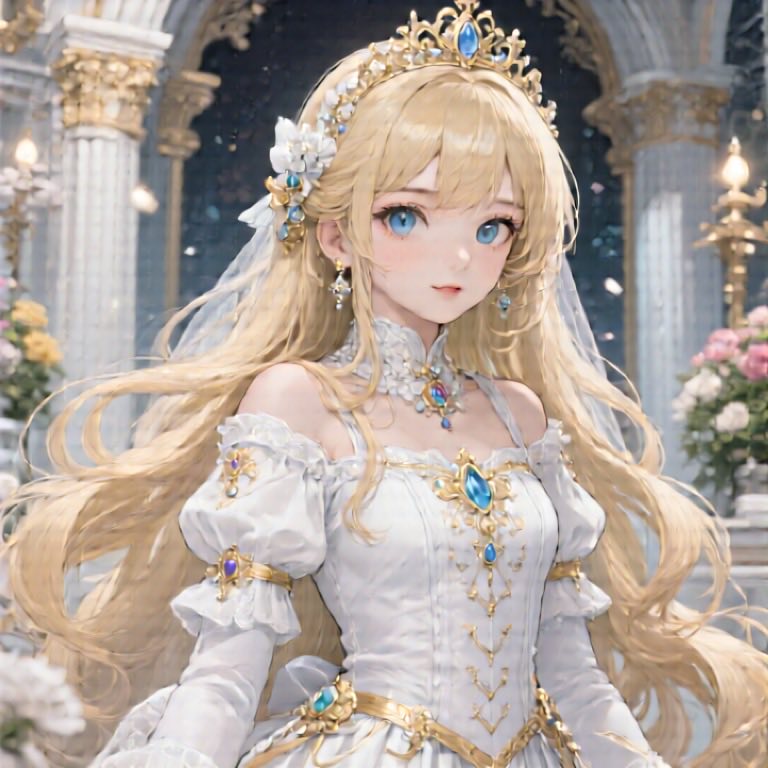}{Portrait of an anime princess in white and golden clothes.}
\qwenSampleRow{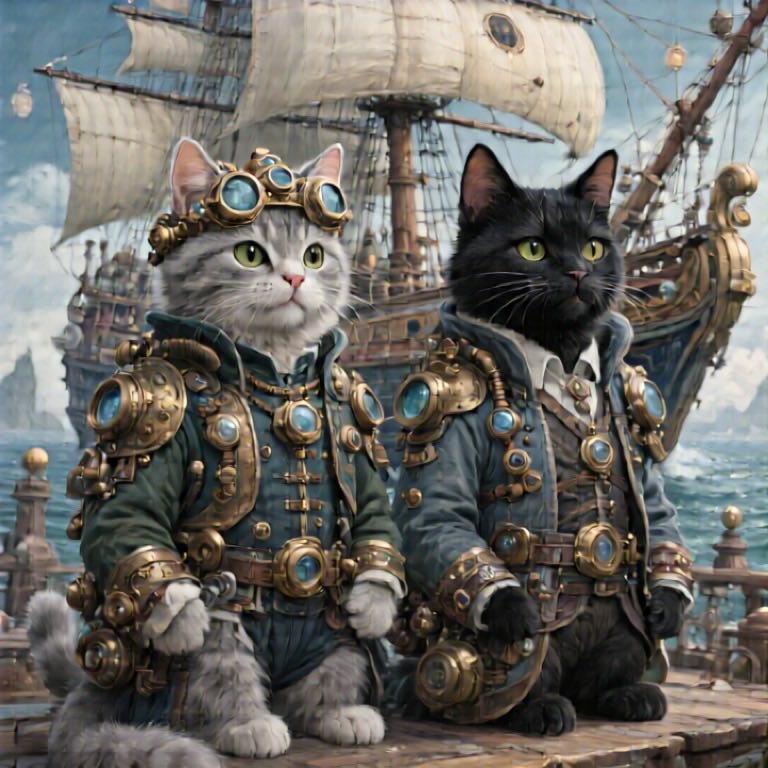}{Two cats, one grey and one black, are wearing steampunk attire and standing in front of a ship in a heavily detailed painting.}
\qwenSampleRow{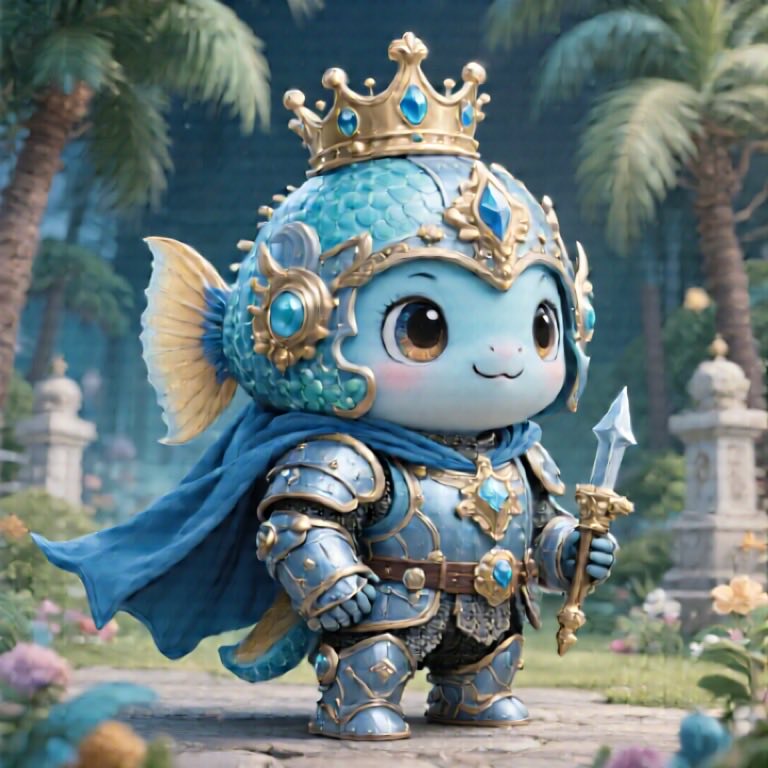}{A cute little anthropomorphic Tropical fish knight wearing a cape and a crown in short, pale blue armor.}
\qwenSampleRow{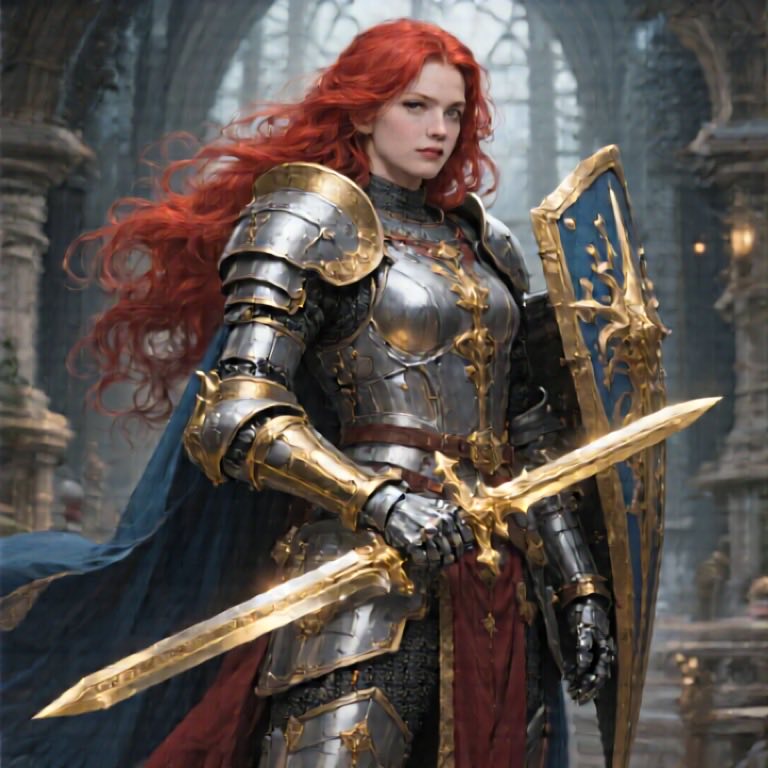}{A red-haired female knight with a golden prosthetic arm wields a long golden blade.}

\qwenSampleRow{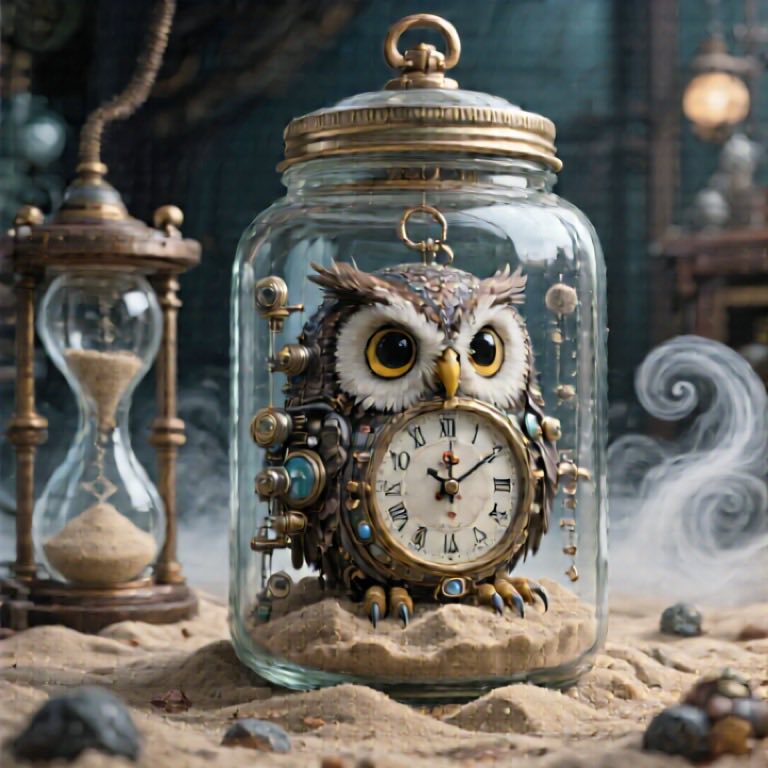}{A steampunk pocketwatch owl is trapped inside a glass jar buried in sand, surrounded by an hourglass and swirling mist.}
\qwenSampleRow{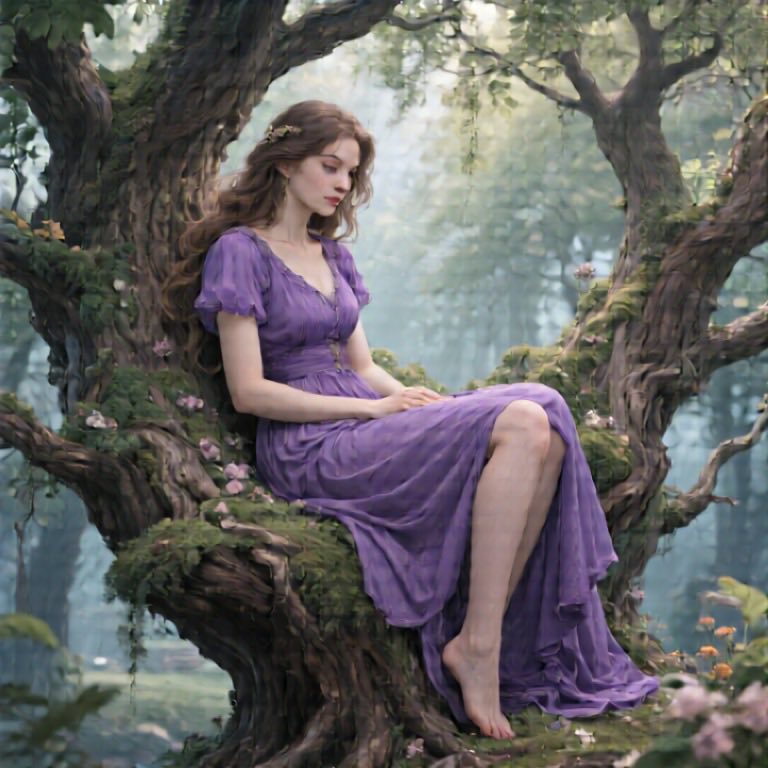}{A lady in a purple dress sitting in a tree --- concept art.}
\qwenSampleRow{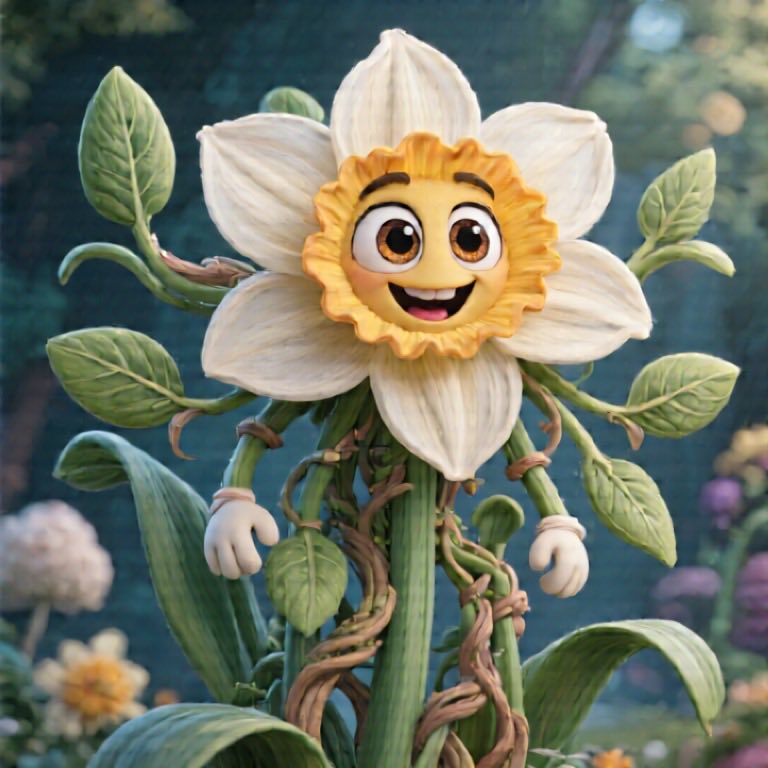}{A happy daffodil with big eyes, multiple leaf arms and vine legs, rendered in 3D Pixar style.}
\qwenSampleRow{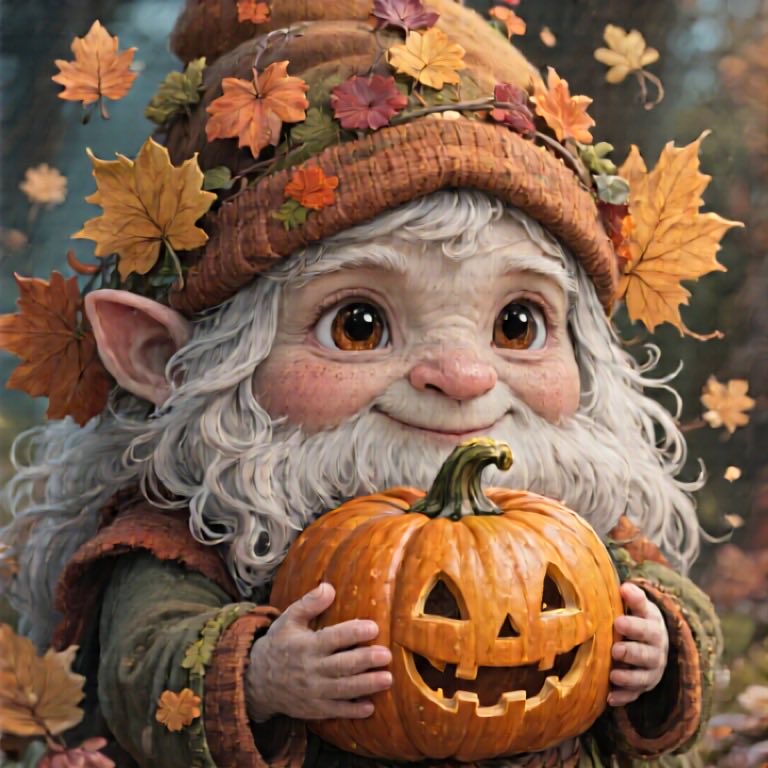}{A hand-drawn cute gnome holding a pumpkin in autumn disguise.}
\qwenSampleRow{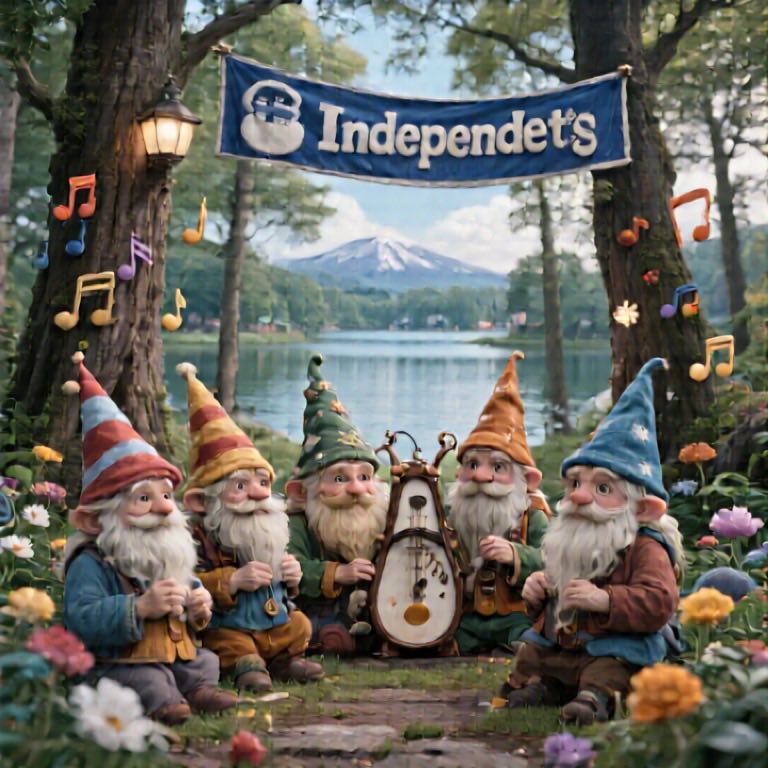}{Gnomes are playing music during Independence Day festivities in a forest near Lake George.}

\qwenSampleRow{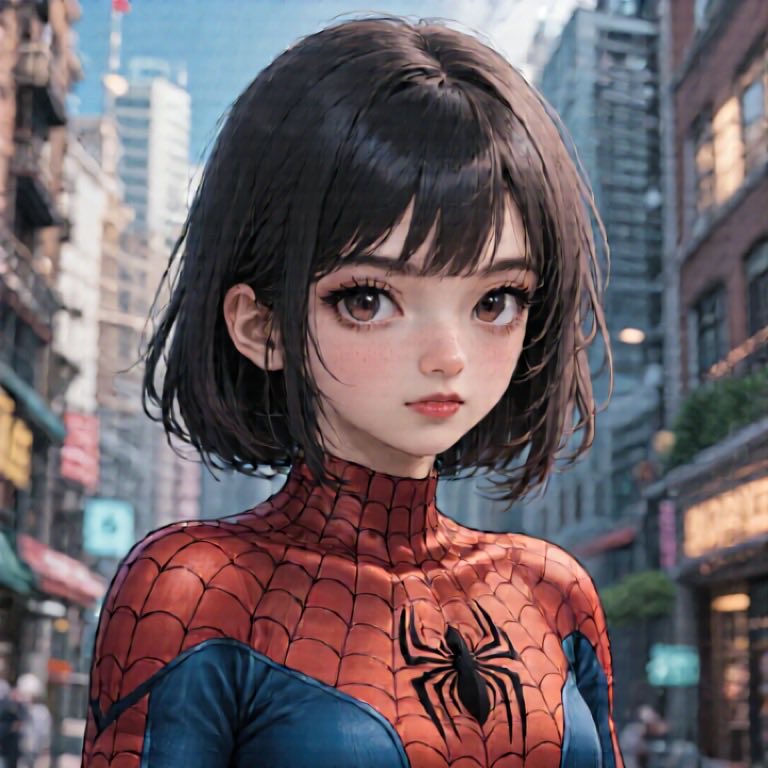}{An anime Spider-Man girl.}
\qwenSampleRow{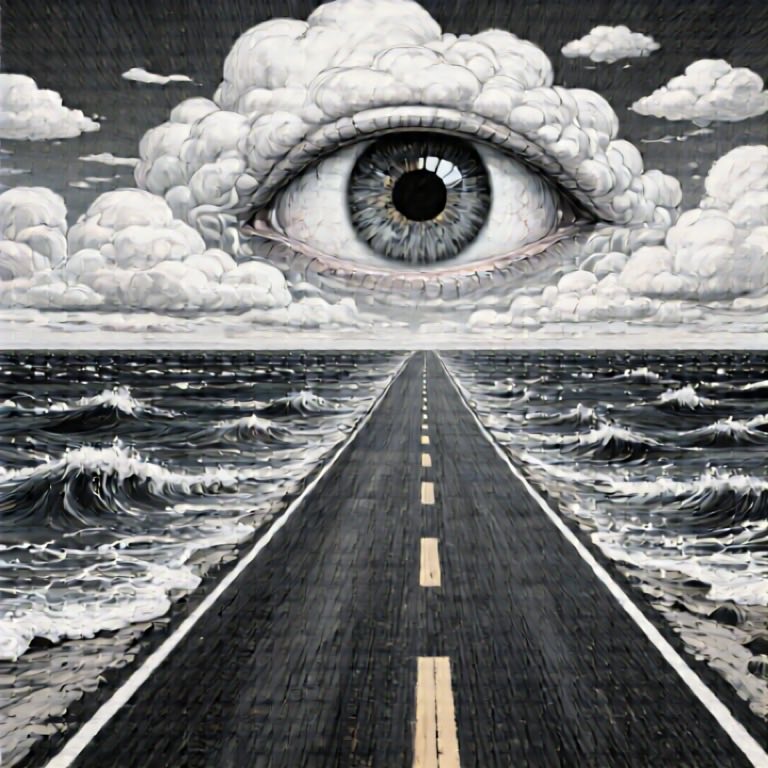}{A black and white drawing of a road splitting the ocean leading to a giant eyeball looking at clouds in the distance.}
\qwenSampleRow{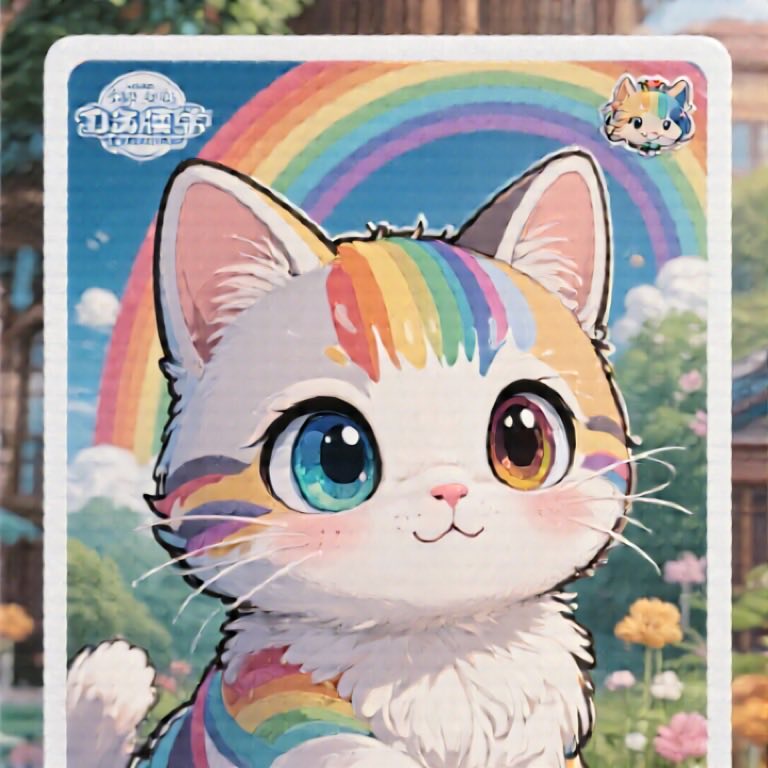}{A cute rainbow kitten with different colored eyes in the chibi-style of Studio Ghibli, featured on a postcard.}
\qwenSampleRow{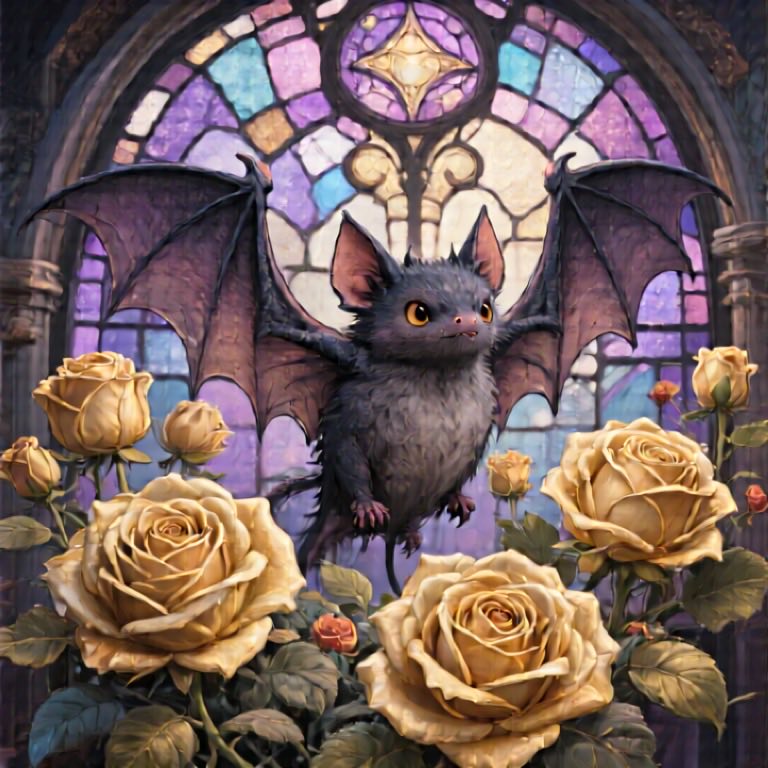}{A detailed soft painting of a bat with golden rose flowers and amethyst stained glass in the background.}
\qwenSampleRow{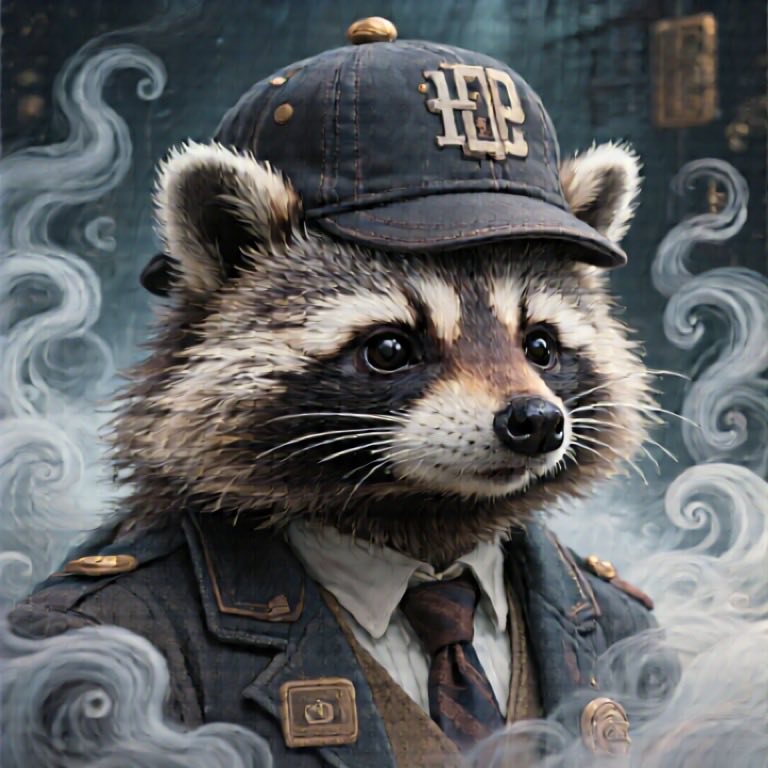}{A raccoon wearing a Peaky Blinders hat, surrounded by swirling mist and rendered with fine detail.}

\qwenSampleRow{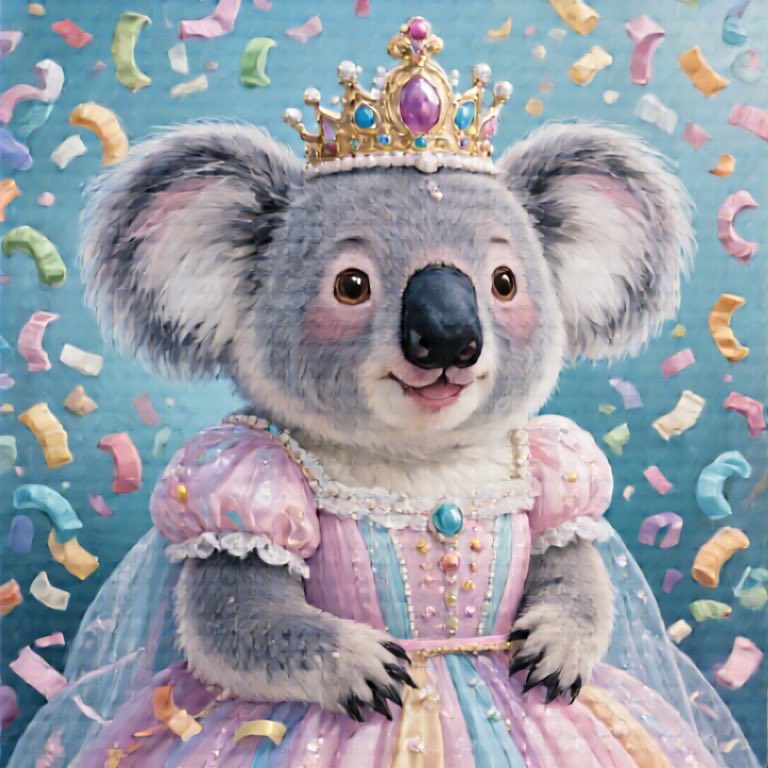}{A painting of a koala wearing a princess dress and crown, with a confetti background.}
\qwenSampleRow{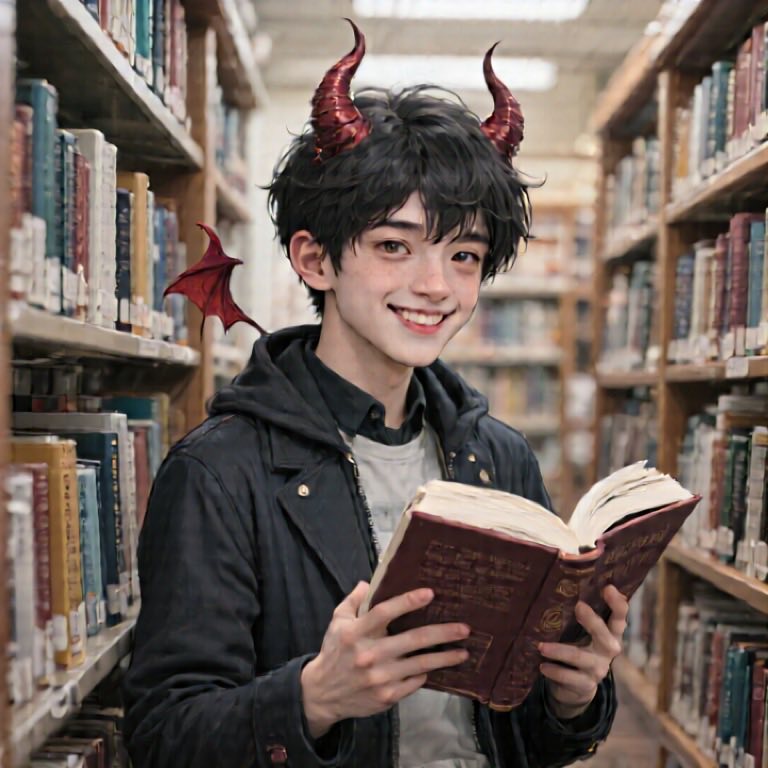}{A demon boy smiling while reading a book in a library.}
\qwenSampleRow{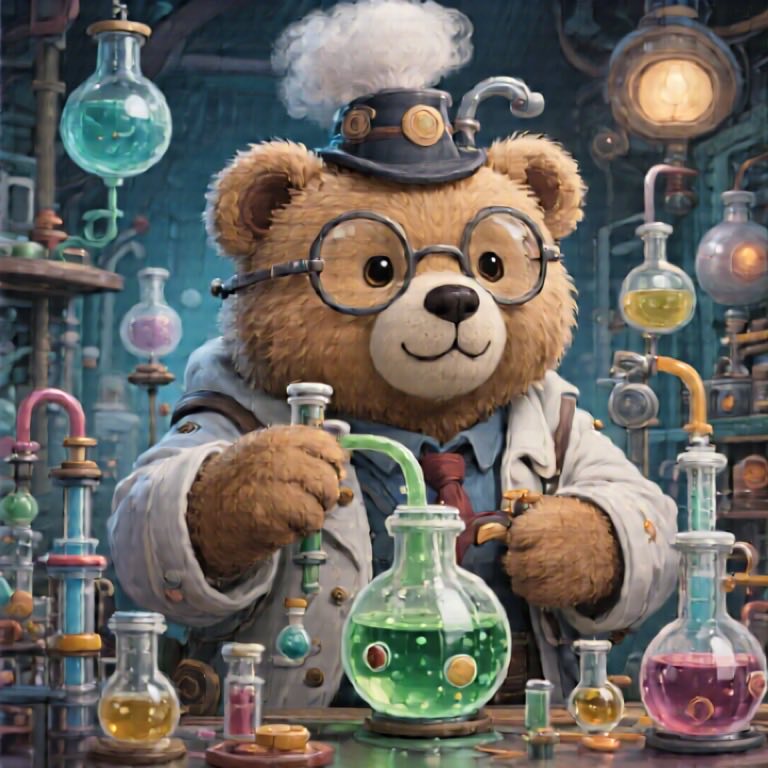}{A teddy bear mad scientist mixing chemicals depicted in oil painting style as a fantasy concept art piece.}
\qwenSampleRow{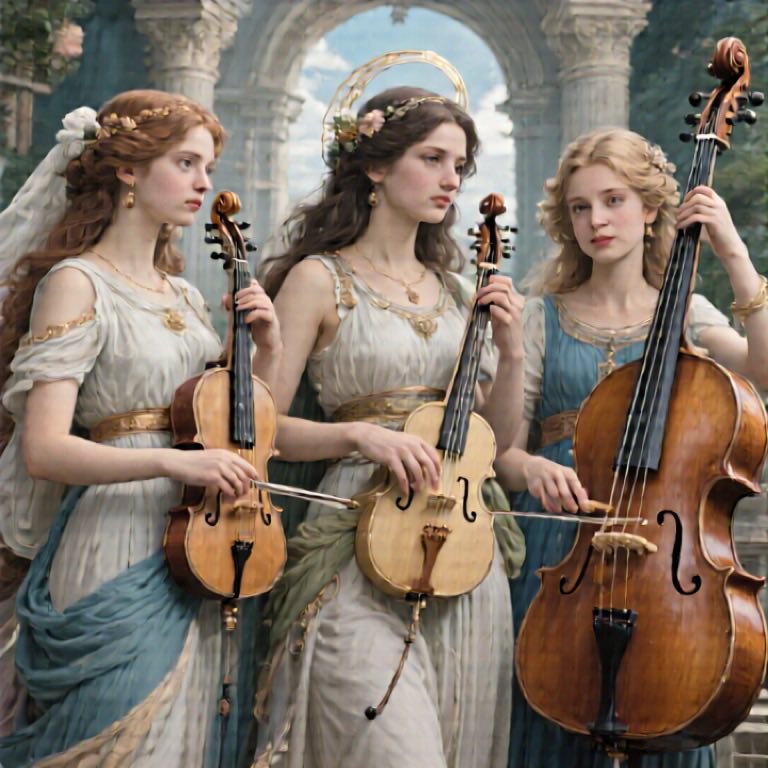}{The image depicts three female figures, known as the muses, playing musical instruments.}
\qwenSampleRow{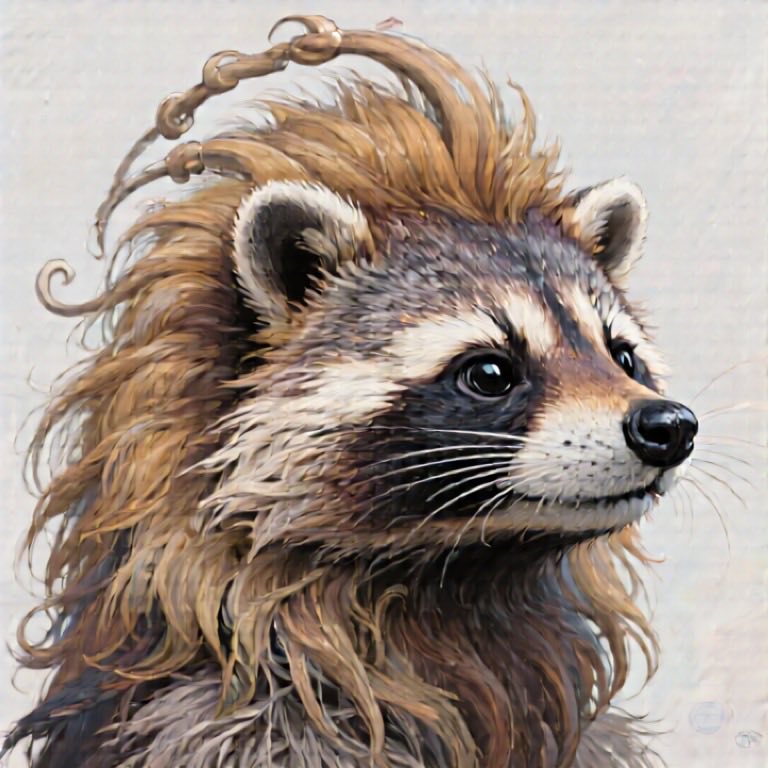}{A colorful, detailed painting of a raccoon with a long, flowing mane reminiscent of a lion's, styled in a mohawk.}

\subsection{Flux (Step 81)}

\fluxSampleRow{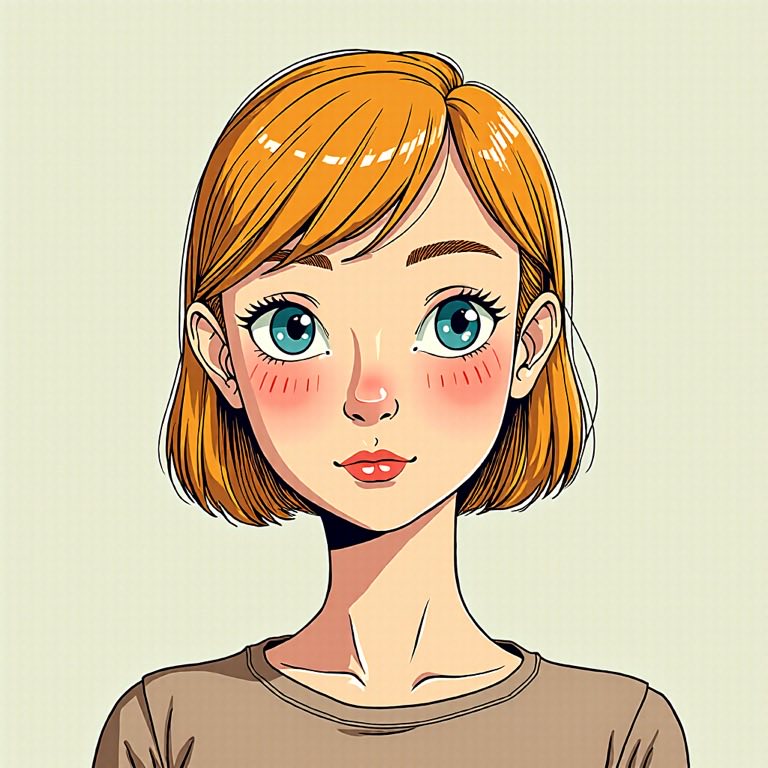}{A minimalist portrait of Chloe Grace by Jean Giraud in a comic style.}
\fluxSampleRow{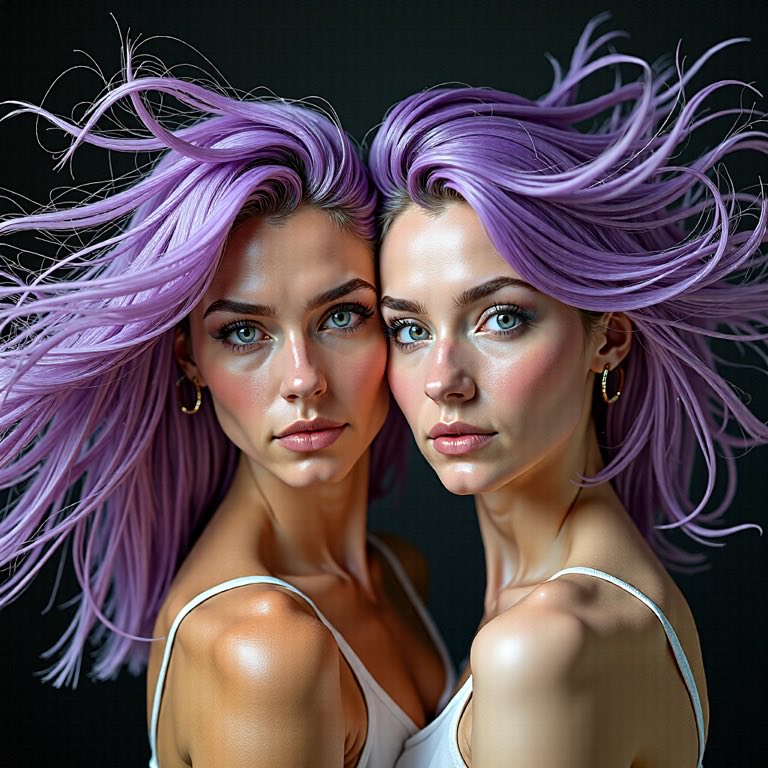}{A portrait of two women with purple hair flying in different directions against a dark background.}
\fluxSampleRow{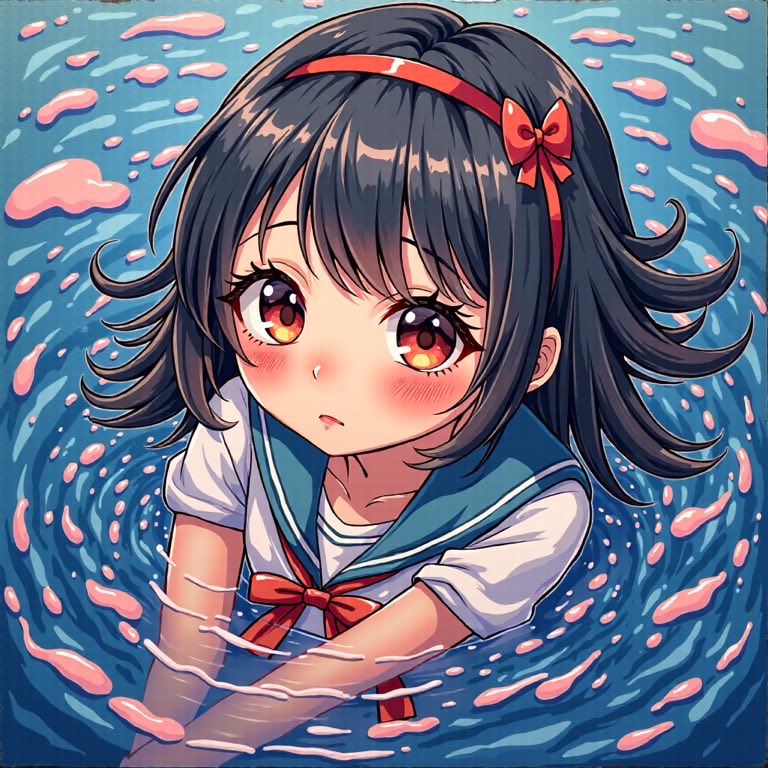}{A cute anime schoolgirl with a sad face submerged in dark pink and blue water, portrayed in an oil painting style.}
\fluxSampleRow{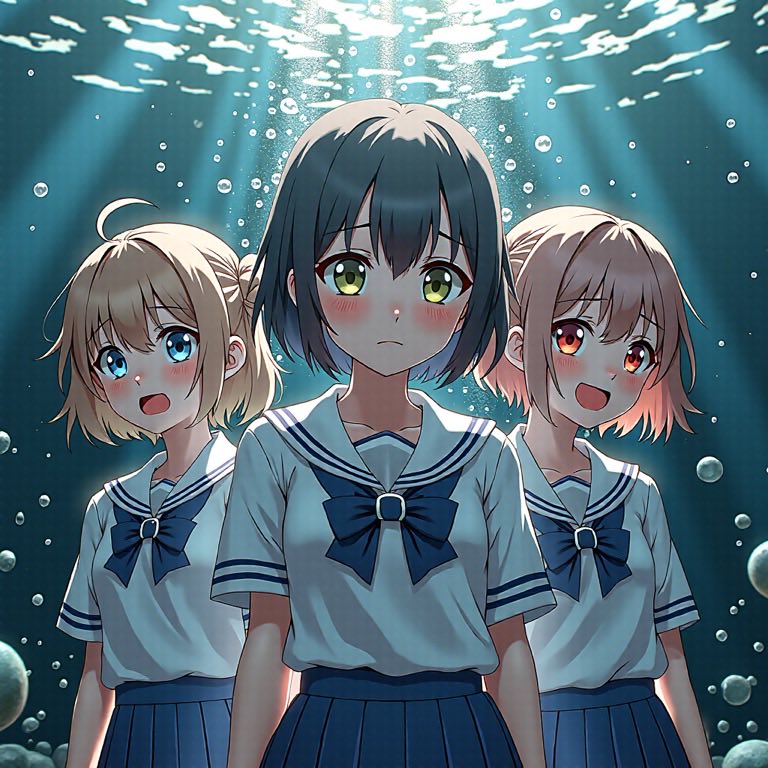}{A 3D rendering of anime schoolgirls with a sad expression underwater, surrounded by dramatic lighting.}
\fluxSampleRow{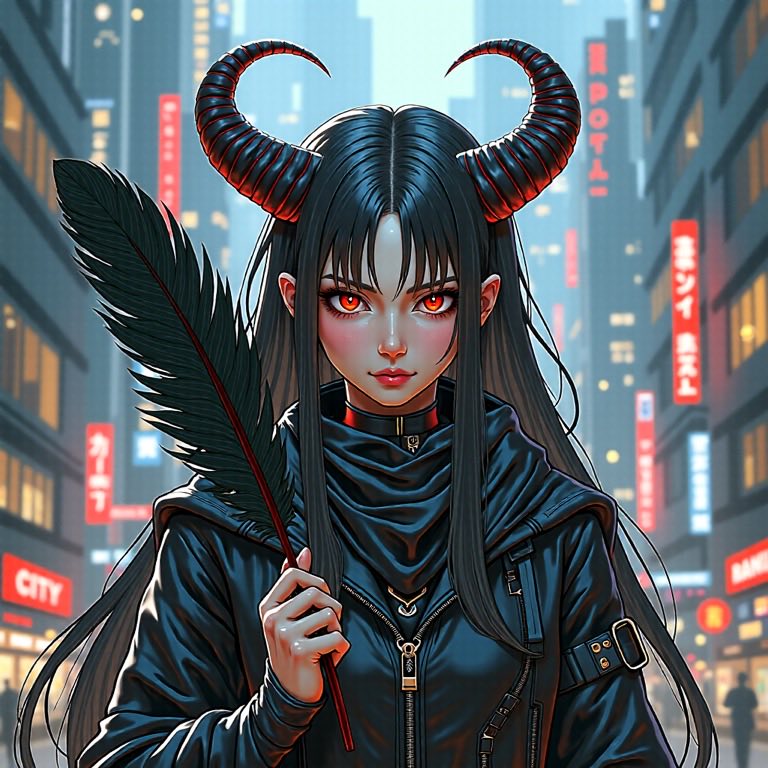}{A cyber girl with demon horns holds a black feather in front of a cybercity with a gloomy expression.}

\fluxSampleRow{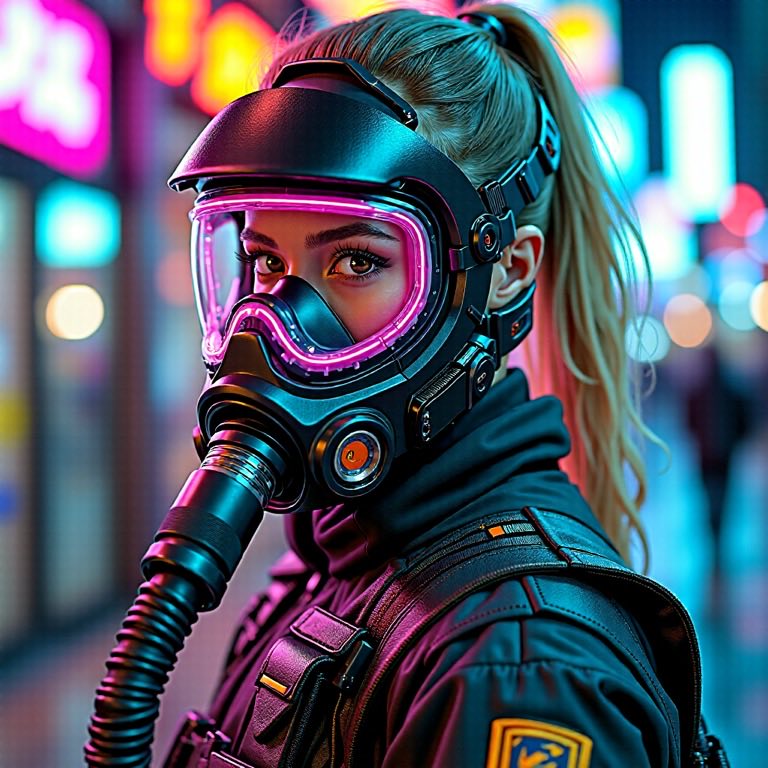}{A key visual of a young female swat officer with a neon futuristic gas mask in a cyberpunk setting.}
\fluxSampleRow{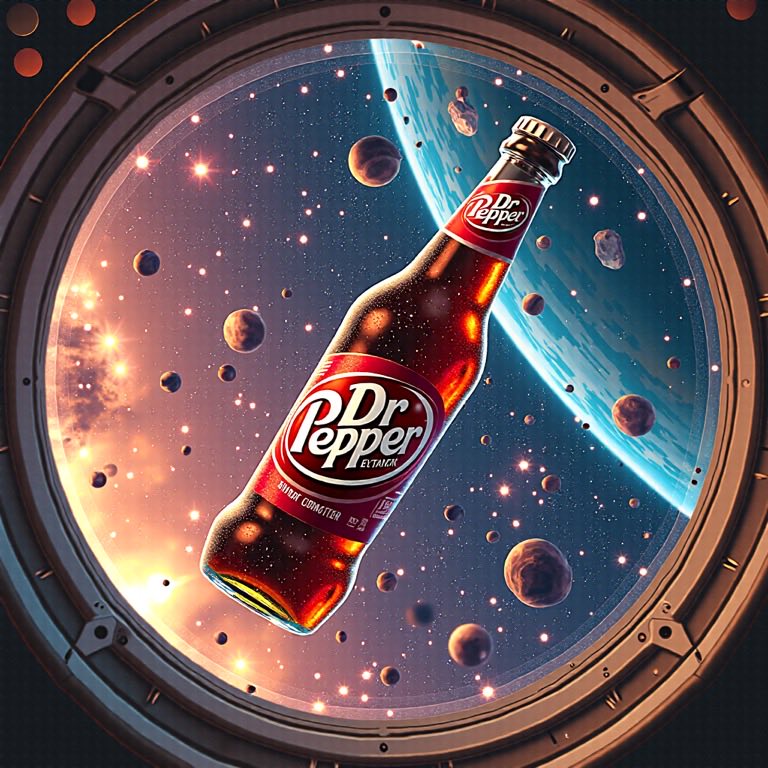}{Dr. Pepper floating in space, viewed through the window of a spaceship.}
\fluxSampleRow{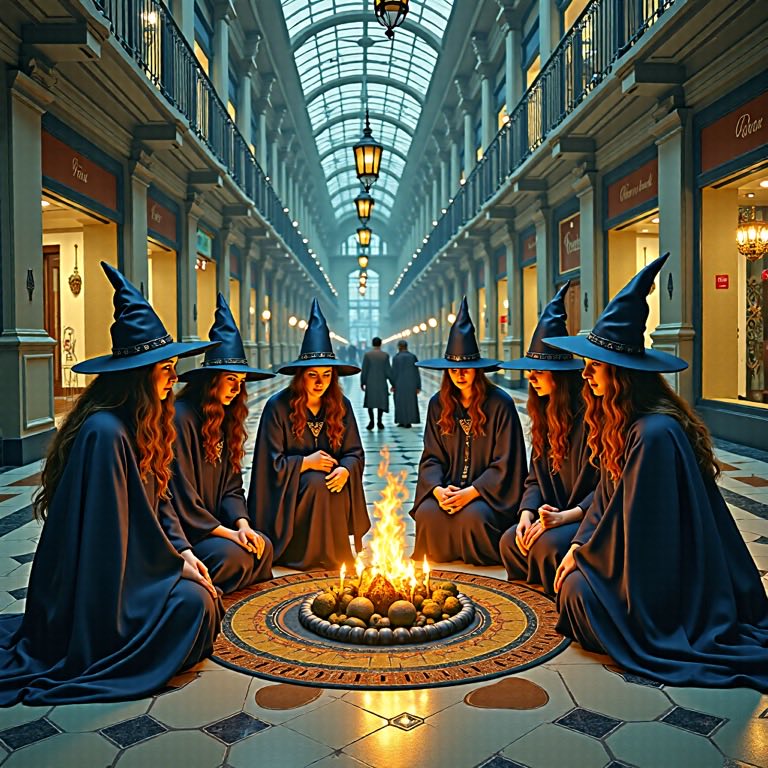}{Witches performing a ritual in a dark mall.}
\fluxSampleRow{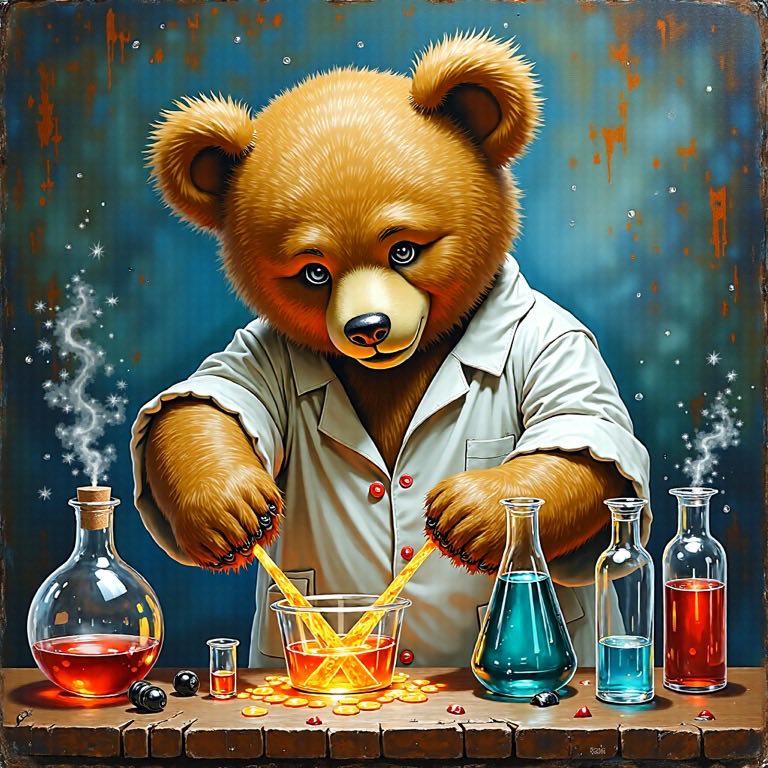}{A teddy bear mad scientist mixing chemicals depicted in oil painting style as a fantasy concept art piece.}
\fluxSampleRow{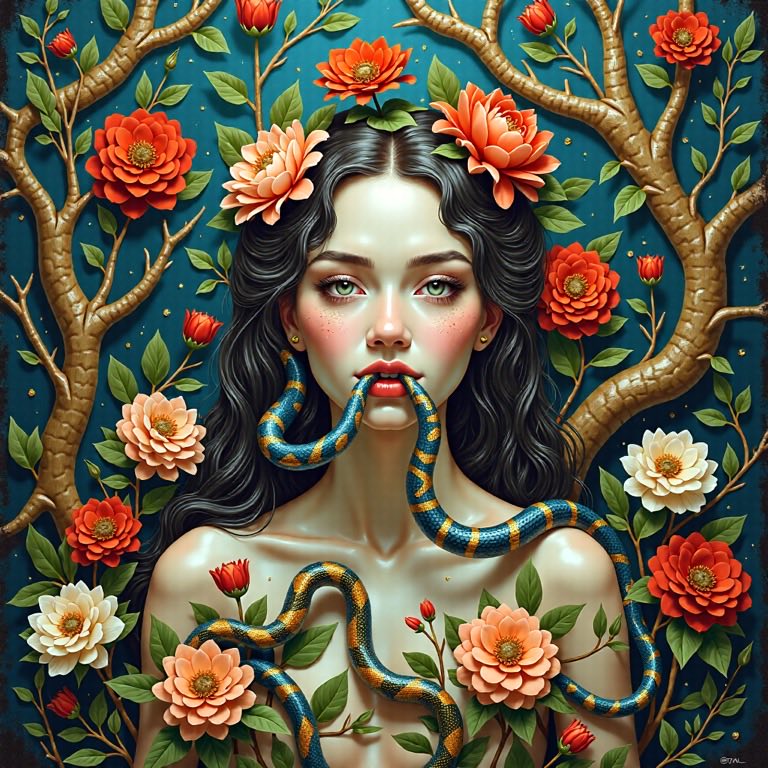}{Image of a woman with snakes in her mouth, surrounded by flowers and a twisted branch background, with a dark and moody atmosphere.}

\fluxSampleRow{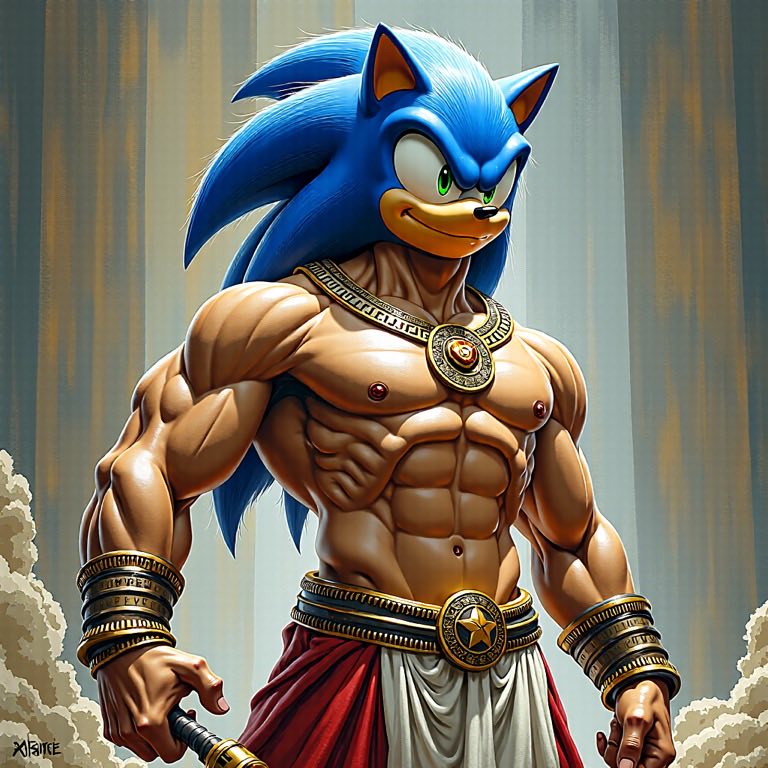}{Sonic the Hedgehog depicted as a muscular Greek god in a highly detailed digital painting.}
\fluxSampleRow{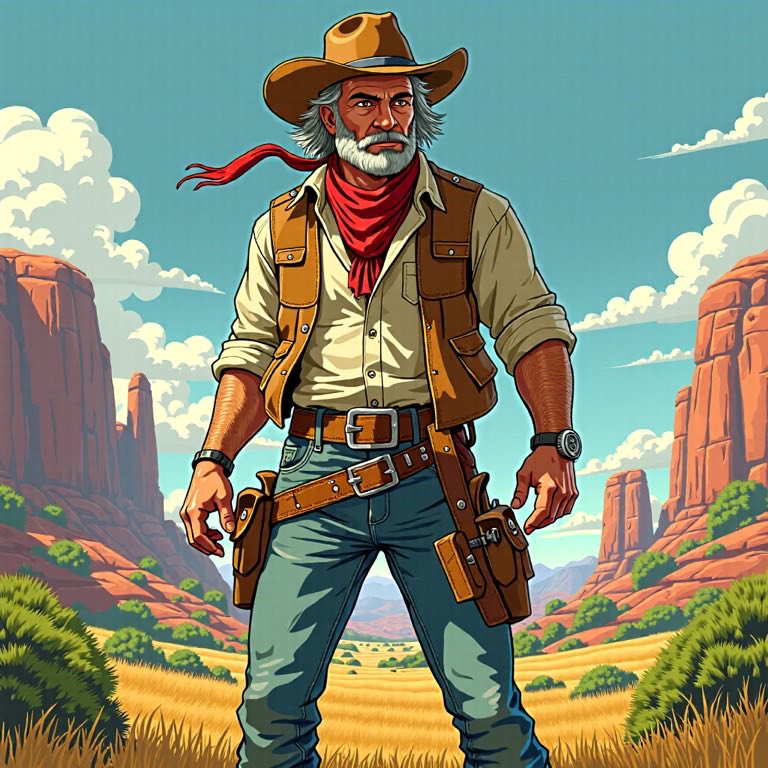}{American cowboy with a scruffy appearance in a retrofuturistic style, inspired by the animations of Studio Ghibli.}
\fluxSampleRow{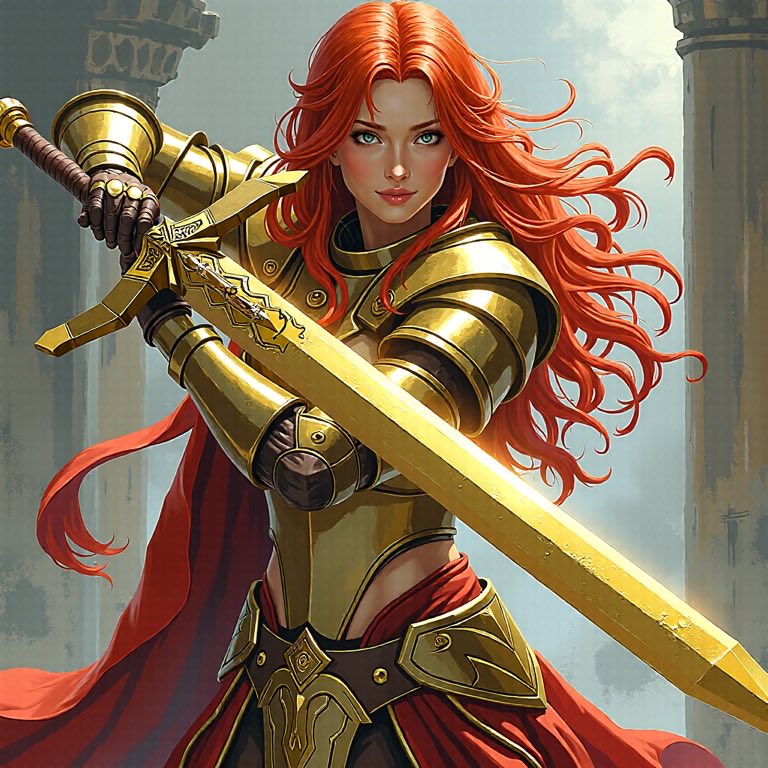}{A red-haired female knight with a golden prosthetic arm wields a long golden blade.}
\fluxSampleRow{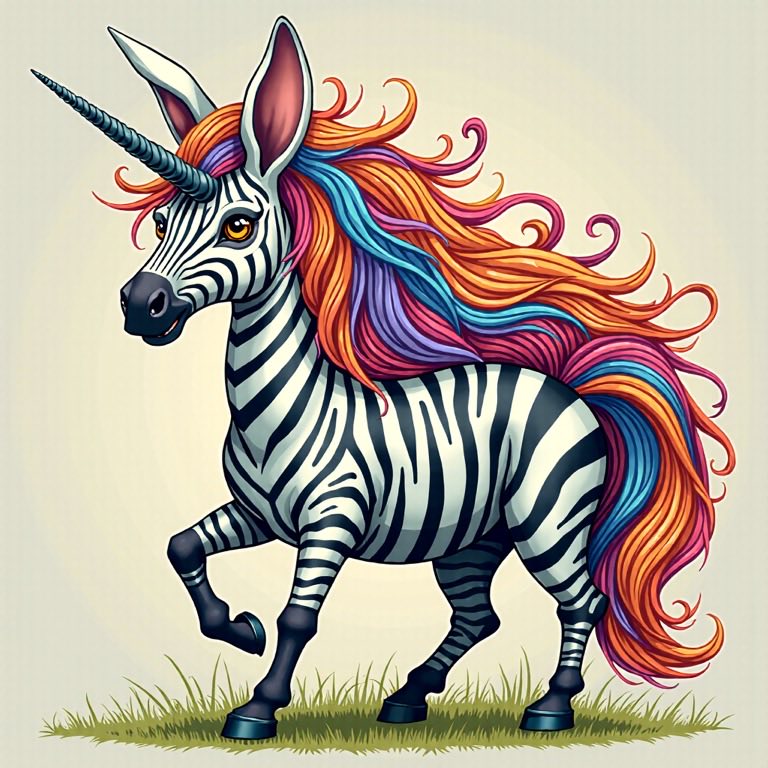}{A hybrid creature concept painting of a zebra-striped unicorn with bunny ears and a colorful mane.}
\fluxSampleRow{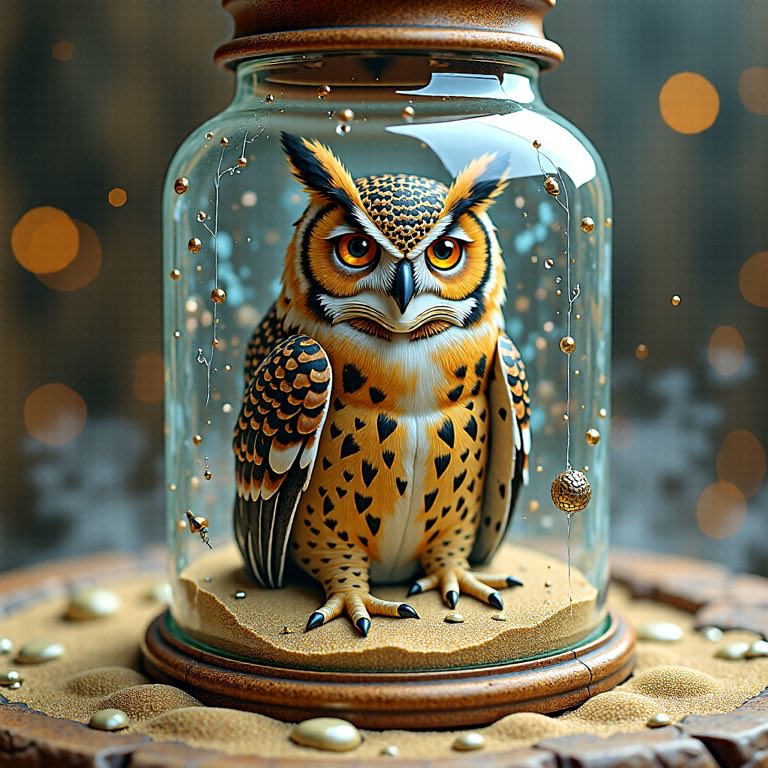}{A steampunk pocketwatch owl is trapped inside a glass jar buried in sand, surrounded by an hourglass and swirling mist.}

\fluxSampleRow{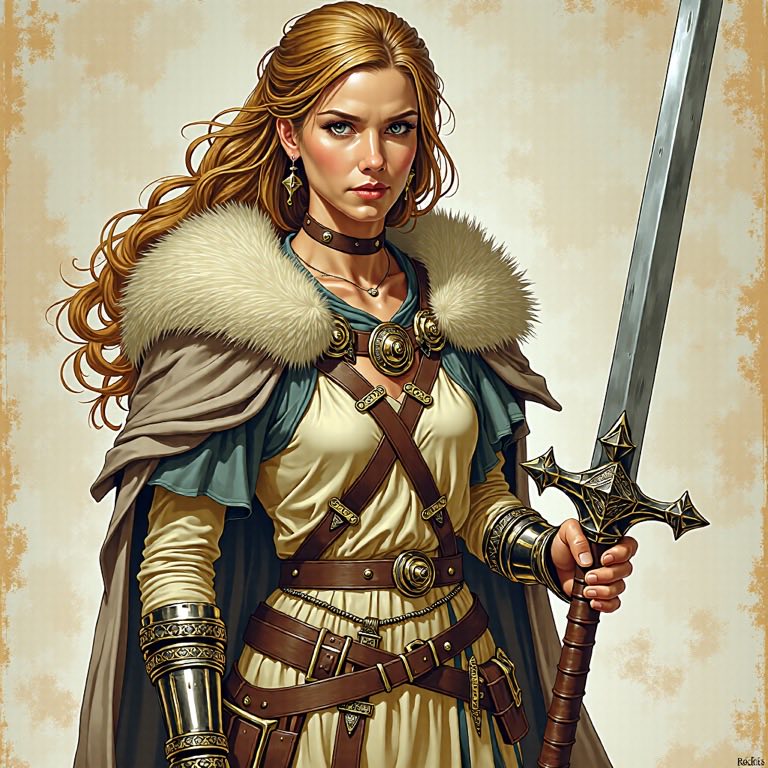}{A female human barbarian depicted in a traditional Dungeons and Dragons illustration.}
\fluxSampleRow{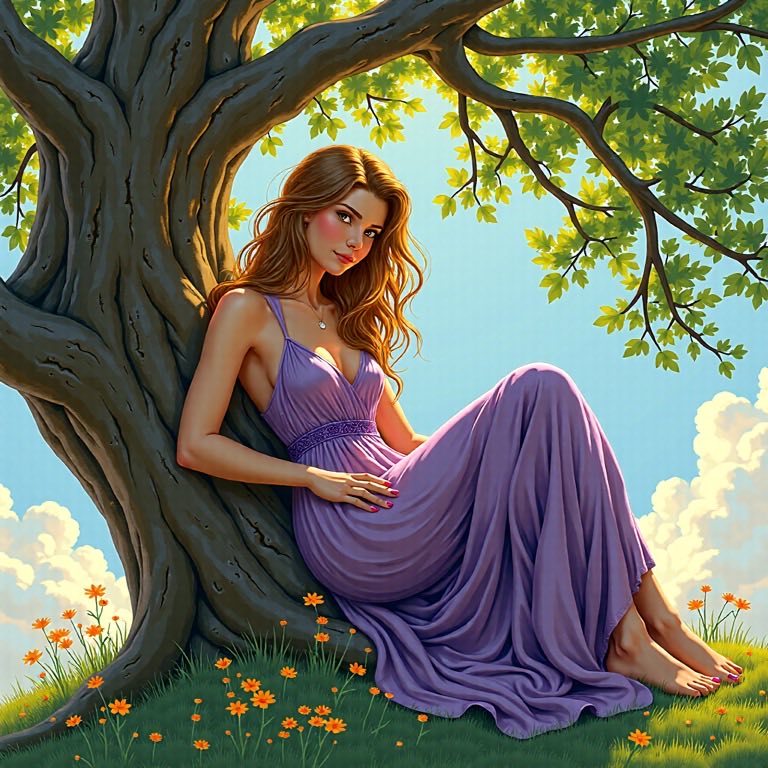}{A lady in a purple dress sitting in a tree --- concept art.}
\fluxSampleRow{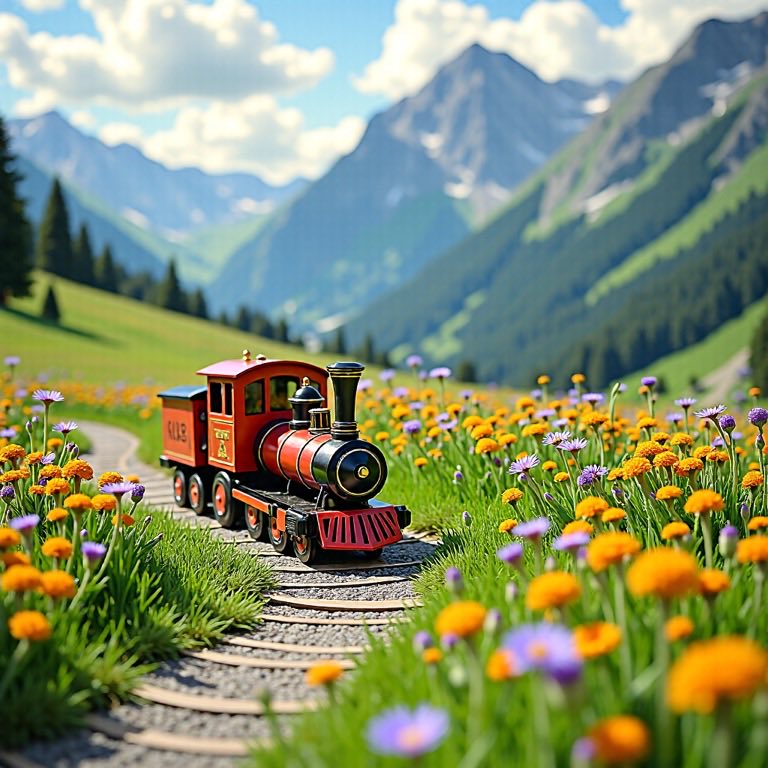}{A colorful tin toy robot runs a steam engine on a path near a beautiful flower meadow in the Swiss Alps with a mountain panorama in the background.}
\fluxSampleRow{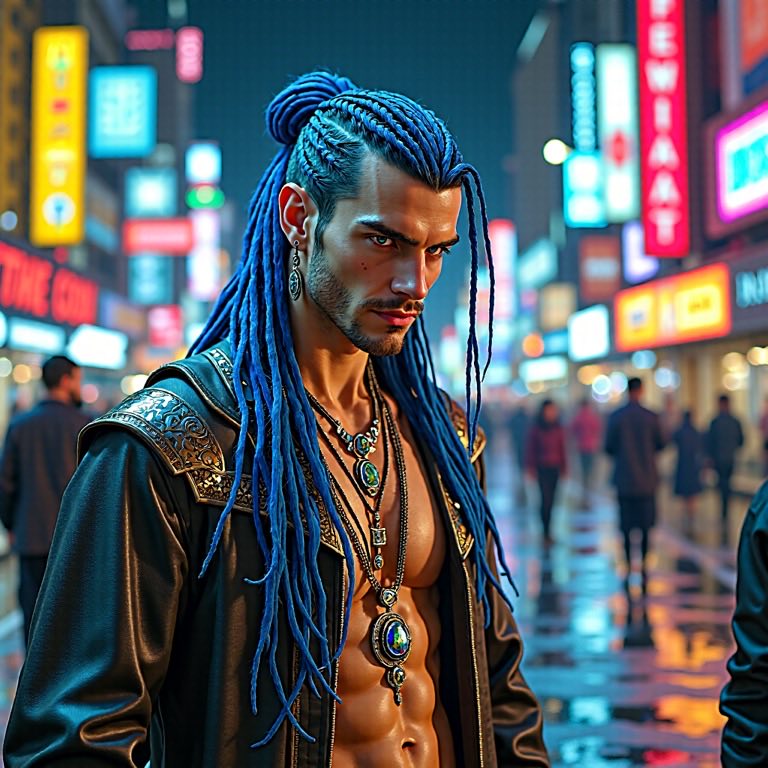}{Male vampire of clan Banu Haqim with blue braided hair stands in a modern city at night surrounded by neon signs, jewelry, and tattoos.}
\fluxSampleRow{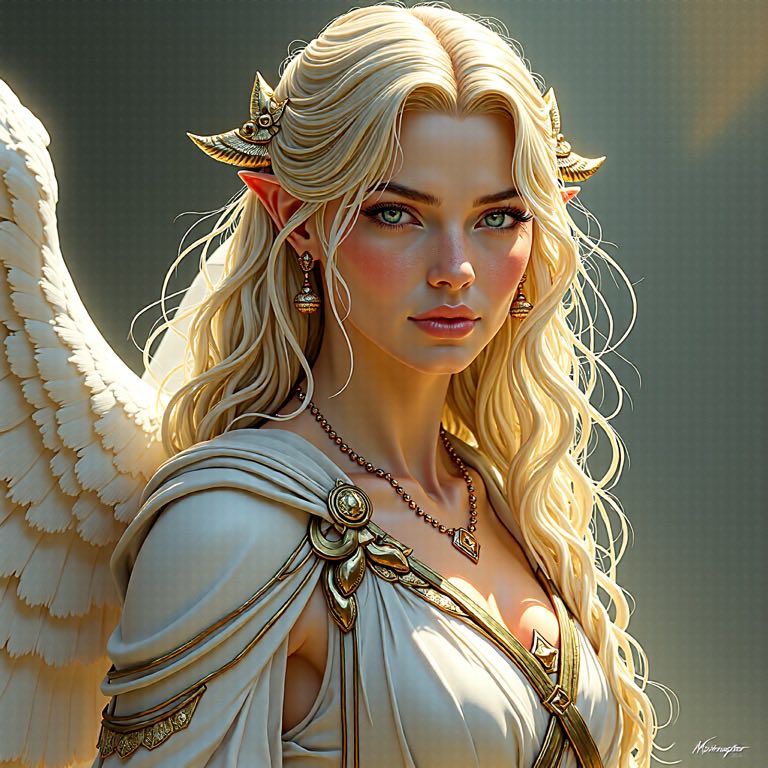}{The image is a digital art depiction of a female angel warrior with detailed features.}

\FloatBarrier
\section{Flow-GRPO-Fast Qualitative Samples}
\label{sec:appendix-flow-fast-samples}

This section provides additional qualitative comparisons for QwenImage and Flux trained with the Flow-GRPO-Fast algorithm. Each row compares outputs generated from the same prompt and shared trajectory ID. QwenImage samples are generated at step~181 and compare the standard Flow-GRPO-Fast baseline against JAGG. Flux samples are taken from step~181 and additionally include the endpoint-only ablation, highlighting the qualitative effect of using the full JAGG aggregation rather than simply back-propagating through two endpoints.

\subsection{QwenImage on Flow-GRPO-Fast (Step 181)}

\flowFastQwenSampleRow{pair_01}{00327}{A painting of a woman by Zinaida Serebriakova wearing a T-shirt with the Supreme brand logo, a sleeveless white blouse, dark brown capris, and black loafers.}
\flowFastQwenSampleRow{pair_02}{00396}{A nightstand topped with a white land-line phone, remote control, a metallic lamp, and two pens next to a black hardcover book.}
\flowFastQwenSampleRow{pair_03}{00256}{Close-up of Cad Bane, with bad flash.}
\flowFastQwenSampleRow{pair_04}{00308}{A lemon with a McDonald's hat.}
\flowFastQwenSampleRow{pair_05}{00062}{A lemon wearing a suit and tie, full body portrait.}
\flowFastQwenSampleRow{pair_06}{00076}{Elvis Presley performing in a jumpsuit, artwork by Alessandro Pautasso.}
\flowFastQwenSampleRow{pair_07}{00224}{Underwater concept art of marine life in Sea of Thieves featuring a wild boar.}
\flowFastQwenSampleRow{pair_08}{00057}{A still frame from the anime film Akira.}
\flowFastQwenSampleRow{pair_09}{00266}{A 3D render of a volcanic icon on a rocky background, in isometric perspective and darkly lit.}
\flowFastQwenSampleRow{pair_10}{00240}{A mother bear and her cub crossing a two lane road.}

\subsection{Flux on Flow-GRPO-Fast (Step 181)}

\flowFastFluxSampleRow{pair_01}{00176}{Witches performing a ritual in a dark mall.}
\flowFastFluxSampleRow{pair_02}{00336}{A high shot of many people standing in an airport.}
\flowFastFluxSampleRow{pair_03}{00188}{A photograph of a giant diamond skull in the ocean, featuring vibrant colors and detailed textures.}
\flowFastFluxSampleRow{pair_04}{00081}{Two men sitting in a green living room talking to a girl seen in the mirror.}
\flowFastFluxSampleRow{pair_05}{00314}{A vehicle with many people even on top.}
\flowFastFluxSampleRow{pair_06}{00322}{A tall giraffe in a zoo eating branches.}
\flowFastFluxSampleRow{pair_07}{00330}{A portrait of a dinner dish of a protein and greens.}
\flowFastFluxSampleRow{pair_08}{00395}{Portrait of anime girl in mechanic armor in night Tokyo.}
\flowFastFluxSampleRow{pair_09}{00184}{A portrait of Nyan Cat, styled after Annie Leibovitz's dramatic photography.}
\flowFastFluxSampleRow{pair_10}{00190}{A ginger haired mouse mechanic in blue overalls in a cyberpunk scene with neon slums in the background.}

\end{document}